\newtheorem{theorem}{Theorem}
\newtheorem{proposition}{Proposition}
\newtheorem{lemma}{Lemma}
\newtheorem{remark}{Remark}
\newtheorem{definition}{Definition}
\newtheorem{example}{Example}
\newtheorem{assumption}{Assumption}
\newenvironment{densitemize}
{\begin{list}               
    {$\bullet$ \hfill}{
        \setlength{\leftmargin}{\parindent}
        \setlength{\parsep}{0.04\baselineskip}
        \setlength{\itemsep}{0.5\parsep}
        \setlength{\labelwidth}{\leftmargin}
        \setlength{\labelsep}{0em}}
    }
{\end{list}}
\providecommand{\eref}[1]{\eqref{#1}}  
\providecommand{\cref}[1]{Chapter~\ref{#1}}
\providecommand{\fref}[1]{Figure~\ref{#1}}
\providecommand{\R}{\ensuremath{\mathbb{R}}}
\providecommand{\E}{\ensuremath{\mathbb{E}}}
\providecommand{\bydef}{\overset{\text{def}}{=}}
\renewcommand{\vec}[1]{\ensuremath{\boldsymbol{#1}}}
\providecommand{\mat}[1]{\ensuremath{\boldsymbol{#1}}}
\providecommand{\calD}{\mathcal{D}}
\providecommand{\calF}{\mathcal{F}}
\providecommand{\calI}{\mathcal{I}}
\providecommand{\calL}{\mathcal{L}}
\providecommand{\calN}{\mathcal{N}}
\providecommand{\mA}{\mat{A}}
\providecommand{\mB}{\mat{B}}
\providecommand{\mG}{\mat{G}}
\providecommand{\mH}{\mat{H}}
\providecommand{\mI}{\mat{I}}
\providecommand{\mS}{\mat{S}}
\providecommand{\mW}{\mat{W}}
\providecommand{\vb}{\vec{b}}
\providecommand{\vs}{\vec{s}}
\providecommand{\vu}{\vec{u}}
\providecommand{\vv}{\vec{v}}
\providecommand{\vx}{\vec{x}}
\providecommand{\vy}{\vec{y}}
\providecommand{\mTheta}{\mat{\Theta}}
\providecommand{\vepsilon}{\vec{\epsilon}}
\providecommand{\vtheta}{\vec{\theta}}
\providecommand{\Htilde}{\widetilde{H}}
\providecommand{\mWtilde}{\mat{\widetilde{W}}}
\providecommand{\mHtilde}{\mat{\widetilde{H}}}
\providecommand{\vvtilde}{\boldsymbol{\widetilde{v}}}
\providecommand{\vxtilde}{\boldsymbol{\widetilde{x}}}
\providecommand{\vxhat}{\boldsymbol{\widehat{x}}}
\providecommand{\vvhat}{\boldsymbol{\widehat{v}}}
\providecommand{\vubar}{\boldsymbol{\bar{u}}}
\providecommand{\vzero}{\vec{0}}
\providecommand{\vone}{\vec{1}}
\newcommand{\subjectto}{\mathop{\mathrm{subject\, to}}}
\newcommand{\argmin}[1]{\mathop{\underset{#1}{\mbox{argmin}}}}
\newcommand{\argmax}[1]{\mathop{\underset{#1}{\mbox{argmax}}}}
\newcommand{\diag}[1]{\mathop{\mathrm{diag}\left\{#1\right\}}}
\definecolor{Gray}{gray}{0.9}
\definecolor{White}{rgb}{1,1,1}
\title{Plug-and-Play ADMM for Image Restoration: \\ Fixed Point Convergence and Applications}
\author{Stanley H. Chan,~\IEEEmembership{Member,~IEEE}, Xiran Wang,~\IEEEmembership{Student Member,~IEEE}, Omar A. Elgendy,~\IEEEmembership{Student Member,~IEEE} \vspace{-2ex}
\thanks{The authors are with the School of Electrical and Computer Engineering, Purdue University, West Lafayette, IN 47907, USA. S. Chan is also with the Department of Statistics, Purdue University, West Lafayette, IN 47907, USA. Emails: \texttt{ \{stanchan, wang470, oelgendy\}@purdue.edu}.}
\thanks{This paper follows the concept of reproducible research. All the results and examples presented in the paper are reproducible using the code and images available online at http://engineering.purdue.edu/ChanGroup/.}
}
\begin{document}
\maketitle

\begin{abstract}
Alternating direction method of multiplier (ADMM) is a widely used algorithm for solving constrained optimization problems in image restoration. Among many useful features, one critical feature of the ADMM algorithm is its \emph{modular} structure which allows one to plug in any off-the-shelf image denoising algorithm for a subproblem in the ADMM algorithm. Because of the plug-in nature, this type of ADMM algorithms is coined the name ``Plug-and-Play ADMM''. Plug-and-Play ADMM has demonstrated promising empirical results in a number of recent papers. However, it is unclear under what conditions and by using what denoising algorithms would it guarantee convergence. Also, since Plug-and-Play ADMM uses a specific way to split the variables, it is unclear if fast implementation can be made for common Gaussian and Poissonian image restoration problems.

In this paper, we propose a Plug-and-Play ADMM algorithm with provable fixed point convergence. We show that for any denoising algorithm satisfying an asymptotic criteria, called \emph{bounded denoisers}, Plug-and-Play ADMM converges to a fixed point under a continuation scheme. We also present fast implementations for two image restoration problems on super-resolution and single-photon imaging. We compare Plug-and-Play ADMM with state-of-the-art algorithms in each problem type, and demonstrate promising experimental results of the algorithm.
\end{abstract}

\begin{keywords}
ADMM, Plug-and-Play, image restoration, denoising, deblurring, inpainting, super-resolution, Poisson noise, single photon imaging
\end{keywords}

\section{Introduction}
\subsection{MAP and ADMM}
Many image restoration tasks can be posted as the following inverse problem: Given an observed image $\vy \in \R^n$ corrupted according to some forward model and noise, find the underlying image $\vx \in \R^n$ which ``best explains'' the observation. In estimation, we often formulate this problem as a maximum-a-posteriori (MAP) estimation \cite{Poor_1998}, where the goal is to maximize the posterior probability:
\begin{align}
\vxhat
&= \argmax{\vx} \;\; p(\vx \;|\; \vy) \notag \\
&= \argmin{\vx} \;\; -\log p(\vy \;|\; \vx) - \log p(\vx),
\label{eq:MAP formulation}
\end{align}
for some conditional probability $p(\vy \,|\, \vx)$ defining the forward imaging model, and a prior distribution $p(\vx)$ defining the probability distribution of the latent image. Because of the explicit use of the forward and the prior models, MAP estimation is also a model-based image reconstruction (MBIR) method \cite{Bouman_2015} which has many important applications in deblurring \cite{Afonso_Bioucas-Dias_Figueiredo_2010,Chan_Khoshabeh_Gibson_2011,Yang_Zhang_Yin_2009}, interpolation \cite{Dahl_Hansen_Jensen_2010,Garcia_2010,Zhou_Chen_Ren_2009}, super-resolution \cite{Dong_Loy_He_2014,Peleg_Elad_2014,He_Siu_2011,Yang_Wright_Huang_2008} and computed tomography \cite{Sreehari_Venkatakrishnan_Wohlberg_2015}, to name a few.

It is not difficult to see that solving the MAP problem in \eref{eq:MAP formulation} is equivalent to solving an optimization problem
\begin{equation}
\vxhat = \argmin{\vx} \;\; f(\vx) + \lambda g(\vx),
\label{eq:MAP equivalent}
\end{equation}
with $f(\vx) \bydef -\log p(\vy \,|\, \vx)$ and $g(\vx) \bydef -(1/\lambda) \log p(\vx)$. The optimization in \eref{eq:MAP equivalent} is a generic unconstrained optimization. Thus, standard optimization algorithms can be used to solve the problem. In this paper, we focus on the alternating direction method of multiplier (ADMM) \cite{Boyd_Parikh_Chu_Peleato_Eckstein_2011}, which has become the workhorse for a variety of problems in the form of \eref{eq:MAP equivalent}.

The idea of ADMM is to convert \eref{eq:MAP equivalent}, an unconstrained optimization, into a constrained problem
\begin{equation}
(\vxhat,\vvhat) = \argmin{\vx,\vv} \;\; f(\vx) + \lambda g(\vv), \;\; \subjectto \;\; \vx = \vv,
\label{eq:MAP equivalent 2}
\end{equation}
and consider its augmented Lagrangian function:
\begin{equation}
\calL(\vx,\vv,\vu) = f(\vx) + \lambda g(\vv) + \vu^T(\vx-\vv) + \frac{\rho}{2}\|\vx - \vv\|^2.
\label{eq:augmented lagrangian function}
\end{equation}
The minimizer of \eref{eq:MAP equivalent 2} is then the saddle point of $\calL$, which can be found by solving a sequence of subproblems
\begin{align}
\vx^{(k+1)} &= \argmin{\vx\in \R^n} \;\; f(\vx) +\frac{\rho}{2} \|\vx - \vxtilde^{(k)}\|^2, \label{eq:ADMM2,x}\\
\vv^{(k+1)} &= \argmin{\vv\in \R^n} \;\; \lambda g(\vv) + \frac{\rho}{2}\|\vv - \vvtilde^{(k)}\|^2,\label{eq:ADMM2,v}\\
\vubar^{(k+1)} &= \vubar^{(k)} + (\vx^{(k+1)} - \vv^{(k+1)}),\label{eq:ADMM2,u}
\end{align}
where $\vubar^{(k)} \bydef (1/\rho)\vu^{(k)}$ is the scaled Lagrange multiplier, $\vxtilde^{(k)} \bydef \vv^{(k)}-\vubar^{(k)}$ and $\vvtilde^{(k)} \bydef \vx^{(k+1)}+\vubar^{(k)}$. Under mild conditions, e.g., when both $f$ and $g$ are closed, proper and convex, and if a saddle point of $\calL$ exists, one can show that the iterates \eref{eq:ADMM2,x}-\eref{eq:ADMM2,u} converge to the solution of \eref{eq:MAP equivalent 2} (See \cite{Boyd_Parikh_Chu_Peleato_Eckstein_2011} for details).

\subsection{Plug-and-Play ADMM}
An important feature of the ADMM iterations \eref{eq:ADMM2,x}-\eref{eq:ADMM2,u} is its modular structure. In particular, \eref{eq:ADMM2,x} can be regarded as an inversion step as it involves the forward imaging model $f(\vx)$, whereas \eref{eq:ADMM2,v} can be regarded as a denoising step as it involves the prior $g(\vv)$. To see the latter, if we define $\sigma = \sqrt{\lambda/\rho}$, it is not difficult to show that \eref{eq:ADMM2,v} is
\begin{equation}
\vv^{(k+1)} = \argmin{\vv\in \R^n} \;\; g(\vv) + \frac{1}{2\sigma^2}\|\vv - \vvtilde^{(k)}\|^2.
\label{eq:ADMM2,v rewrite}
\end{equation}
Treating $\vvtilde^{(k)}$ as the ``noisy'' image, \eref{eq:ADMM2,v rewrite} minimizes the residue between $\vvtilde^{(k)}$ and the ``clean'' image $\vv$ using the prior $g(\vv)$. For example, if $g(\vx) = \|\vx\|_{TV}$ (the total variation norm), then \eref{eq:ADMM2,v rewrite} is the standard total variation denoising problem.

Building upon this intuition, Venkatakrishnan et al. \cite{Venkatakrishnan_Bouman_Wohlberg_2013} proposed a variant of the ADMM algorithm by suggesting that one does not need to specify $g$ before running the ADMM. Instead, they replace \eref{eq:ADMM2,v} by using an off-the-shelf image denoising algorithm, denoted by $\calD_{\sigma}$, to yield
\begin{equation}
\vv^{(k+1)} = \calD_\sigma\left( \vvtilde^{(k)} \right).
\label{eq:ADMM2,v rewrite 2}
\end{equation}
Because of the heuristic nature of the method, they called the resulting algorithm as the Plug-and-Play ADMM. An interesting observation they found in \cite{Venkatakrishnan_Bouman_Wohlberg_2013} is that although Plug-and-Play ADMM appears ad-hoc, for a number of image reconstruction problems the algorithm indeed performs better than some state-of-the-art methods. A few recent reports have concurred similar observations \cite{Sreehari_Venkatakrishnan_Wohlberg_2015,Dar_Bruckstein_Elad_2015,Rond_Giryes_Elad_2015,Brifman_Romano_Elad_2016}.

\subsection{Challenges of Plug-and-Play ADMM}
\label{sec:problems}
From a theoretical point of view, the main challenge of analyzing Plug-and-Play ADMM is the denoiser $\calD_\sigma$. Since $\calD_\sigma$ is often nonlinear and does not have closed form expressions, the analysis has been very difficult. Specifically, the following three questions remain open:
\begin{enumerate}
\item Convergence of the Algorithm. Classical results of ADMM require $g$ to be closed, proper and convex in order to ensure convergence \cite{Boyd_Parikh_Chu_Peleato_Eckstein_2011}. While newer results have extended ADMM for nonconvex problems \cite{Hong_Luo_Razaviyayn_2015}, there is little work addressing the case when $g$ is defined implicitly through $\calD_\sigma$. To the best of our knowledge, the only existing convergence analysis, to date, is the one by Sreehari et al. \cite{Sreehari_Venkatakrishnan_Wohlberg_2015} for the case when $\calD_\sigma$ is a symmetric smoothing filter \cite{Milanfar_2013b,Chan_Zickler_Lu_2015}. However, for general $\calD_\sigma$ the convergence is not known.
\item Original Prior. Since $\calD_\sigma$ is an off-the-shelf image denoising algorithm, it is unclear what prior $g$ does it correspond to. In \cite{Chan_2016}, Chan addresses this question by explicitly deriving the original prior $g$ when $\calD_\sigma$ is a symmetric smoothing filter \cite{Chan_2016}. In this case, the author shows that $g$ is a modified graph Laplacian prior, with better restoration performance compared to the conventional graph Laplacian \cite{Milanfar_2013a}. However, beyond symmetric smoothing filters it becomes unclear if we can find the corresponding $g$.
\item Implementation. The usage of Plug-and-Play ADMM has been reported in a few scattered occasions, with some work in electron tomography \cite{Sreehari_Venkatakrishnan_Wohlberg_2015}, compressive sensing \cite{Dar_Bruckstein_Elad_2015}, and some very recent applications in Poisson recovery \cite{Rond_Giryes_Elad_2015} and super-resolution \cite{Brifman_Romano_Elad_2016}. However, the common challenge underpinning these applications is whether one can obtain a fast solver for the inversion step in \eref{eq:ADMM2,x}. This has not been a problem for conventional ADMM, because in many cases we can use another variable splitting strategy to replace $\vv = \vx$ in \eref{eq:MAP equivalent 2}, e.g., using $\vv = \mB\vx$ when $g(\vx) = \|\mB\vx\|_1$ \cite{Afonso_Bioucas-Dias_Figueiredo_2010}.
\end{enumerate}

\subsection{Related Works}
Plug-and-Play ADMM was first reported in 2013. Around the same period of time there is an independent series of studies using denoisers for  approximate message passing (AMP) algorithms \cite{Metzler_Maleki_Baraniuk_2014,Metzler_Maleki_Baraniuk_2015,Tan_Ma_Baron_2014,Tan_Ma_Baron_2015}. The idea was to replace the shrinkage step of the standard AMP algorithm with any off-the-shelf algorithm in the class of ``proper denoisers'' -- denoisers which ensure that the noise variance is sufficiently suppressed. (See Section \ref{sec:convergence analysis} for more discussions.) However, this type of denoise-AMP algorithms rely heavily on the Gaussian statistics of the random measurement matrix $\mA$ in a specific forward model $f(\vx) = \|\mA\vx-\vy\|^2$. Thus, if $f(\vx)$ departs from quadratic or if $\mA$ is not random, then the behavior of the denoise-AMP becomes unclear.

Using denoisers as building blocks of an image restoration algorithm can be traced back further, e.g., wavelet denoiser for signal deconvolution \cite{Neelamani_Choi_Baraniuk_2004}. Of particular relevance to Plug-and-Play ADMM is the work of Danielyan et al. \cite{Danielyan_Katkovnik_Egiazarian_2012}, where they proposed a variational method for deblurring using BM3D as a prior. The idea was later extended by Zhang et al. to other restoration problems \cite{Zhang_Zhao_Gao_2014}. However, these algorithms are customized for the specific denoiser BM3D. In contrast, the proposed Plug-and-Play ADMM supports any denoiser satisfying appropriate assumptions. Another difference is that when BM3D is used in \cite{Danielyan_Katkovnik_Egiazarian_2012} and \cite{Zhang_Zhao_Gao_2014}, the grouping of the image patches are fixed throughout the iterations. Plug-and-Play ADMM allows re-calculation of the grouping at every iteration. In this aspect, the Plug-and-Play ADMM is more general than these algorithms.

A large number of denoisers we use nowadays are patch-based denoising algorithms. All these methods can be considered as variations in the class of universal denoisers \cite{Weissman_Ordentlich_Seroussi_2005,Sivaramakrishnan_Weissman_2009} which are asymptotically optimal and do not assume external knowledge of the latent image (e.g., prior distribution). Asymptotic optimality of patch-based denoisers has been recognized empirically by Levin et al. \cite{Levin_Nadler_2011,  Levin_Nadler_Durand_2012}, who showed that non-local means \cite{Buades_Coll_2005_Journal} approaches the MMSE estimate as the number of patches grows to infinity. Recently, Ma et al. \cite{Ma_Zhu_Baron_2016} made attempts to integrate universal denoisers with approximate message passing algorithms.

\subsection{Contributions}
The objective of this paper is to address the first and the third issue mentioned in Section~\ref{sec:problems}. The contributions of this paper are as follows:

First, we modify the original Plug-and-Play ADMM by incorporating a continuation scheme. We show that the new algorithm is guaranteed to converge for a broader class of denoisers known as the \emph{bounded denoisers}. Bounded denoisers are asymptotically invariant in the sense that the denoiser approaches an identity operator as the denoising parameter vanishes. Bounded denoisers are weaker than the non-expansive denoisers presented in \cite{Sreehari_Venkatakrishnan_Wohlberg_2015}. However, for weaker denoisers we should also expect a weaker form of convergence. We prove that the new Plug-and-Play ADMM has a \emph{fixed point} convergence, which complements the global convergence results presented in \cite{Sreehari_Venkatakrishnan_Wohlberg_2015}.

Second, we discuss fast implementation techniques for image super-resolution and single photon imaging problems. For the super-resolution problem, conventional ADMM requires multiple variable splits or an inner conjugate gradient solver to solve the subproblem. We propose a polyphase decomposition based method which gives us closed-form solutions. For the single photon imaging problem, existing ADMM algorithm are limited to explicit priors such as total variation. We demonstrate how Plug-and-Play ADMM can be used and we present a fast implementation by exploiting the separable feature of the problem.

The rest of the paper is organized as follows. We first discuss the Plug-and-Play ADMM algorithm and the convergence properties in Section~\ref{sec:convergence}. We then discuss the applications in Section~\ref{sec:applications}. Experimental results are presented in Section~\ref{sec:experiment}.

\section{Plug-and-Play ADMM and Convergence}
\label{sec:convergence}
In this section we present the proposed Plug-and-Play ADMM and discuss its convergence property. Throughout this paper, we assume that the unknown image $\vx$ is bounded in an interval $[x_{\min}, \; x_{\max}]$ where the upper and lower limits can be obtained from experiment or from prior knowledge. Thus, without loss of generality we assume $\vx \in [0,1]^n$.

\subsection{Plug-and-Play ADMM}
The proposed Plug-and-Play ADMM algorithm is a modification of the conventional ADMM algorithm in \eref{eq:ADMM2,x}-\eref{eq:ADMM2,u}. Instead of choosing a constant $\rho$, we increase $\rho$ by $\rho_{k+1} = \gamma_k \rho_k$ for $\gamma_k \ge 1$. In optimization literature, this is known as a continuation scheme \cite{Ng_2002} and has been used in various problems, e.g., \cite{Harmany_Marcia_Willet_2011,Wang_Yang_Yin_2008}. Incorporating this idea into the ADMM algorithm, we obtain the following iteration:
\begin{align}
\vx^{(k+1)} &= \argmin{\vx} \; f(\vx) + (\rho_k/2)\|\vx - (\vv^{(k)}-\vu^{(k)})\|^2 \label{eq:pp admm x}\\
\vv^{(k+1)} &= \calD_{\sigma_k}(\vx^{(k+1)}+\vu^{(k)}) \label{eq:pp admm v}\\
\vu^{(k+1)} &= \vu^{(k)} + (\vx^{(k+1)}-\vv^{(k+1)})\label{eq:pp admm u}\\
\rho_{k+1}  &= \gamma_k \rho_k, \label{eq:pp admm rho}
\end{align}
where $\calD_{\sigma_k}$ is a denoising algorithm (called a ``denoiser'' for short), and $\sigma_k \bydef \sqrt{\lambda/\rho_k}$ is a parameter controlling the strength of the denoiser.

There are different options in setting the update rule for $\rho_k$. In this paper we present two options. The first one is a \emph{monotone update} rule which defines
\begin{equation}
\rho_{k+1} = \gamma\rho_k, \quad\quad \mbox{for all} \;\; k
\end{equation}
for a constant $\gamma > 1$. The second option is an \emph{adaptive update} rule by considering the relative residue:
\begin{align}
\Delta_{k+1} &\bydef \frac{1}{\sqrt{n}}\Big(\|\vx^{(k+1)}-\vx^{(k)}\|_2 + \|\vv^{(k+1)}-\vv^{(k)}\|_2 \notag \\
&\quad\quad + \|\vu^{(k+1)}-\vu^{(k)}\|_2 \Big).
\label{eq:Delta}
\end{align}
For any $\eta \in [0,\,1)$ and let $\gamma > 1$ be a constant, we conditionally update $\rho_k$ according to the followings:
\begin{itemize}
\item If $\Delta_{k+1} \ge \eta \Delta_{k}$, then $\rho_{k+1} = \gamma\rho_k$.
\item If $\Delta_{k+1} < \eta \Delta_{k}$, then $\rho_{k+1} = \rho_k$.
\end{itemize}
The adaptive update scheme is inspired from \cite{Goldstein_ODonoghue_Setzer_2012}, which was originally used to accelerate ADMM algorithms for convex problems. It is different from the residual balancing technique commonly used in ADMM, e.g., \cite{Boyd_Parikh_Chu_Peleato_Eckstein_2011}, as $\Delta_{k+1}$ sums of all primal and dual residues instead of treating them individually. Our experience shows that the proposed scheme is more robust than residual balancing because the denoiser could potentially generate nonlinear effects to the residuals. Algorithm~\ref{alg:algorithm1} shows the overall Plug-and-Play ADMM.

\begin{algorithm}[t]
\caption{Plug-and-Play ADMM}
\begin{algorithmic}
\STATE Input: $\rho_0$, $\lambda$, $\eta < 1$, $\gamma > 1$.
\WHILE{Not Converge}
    \STATE $\vx^{(k+1)} = \argmin{\vx} \; f(\vx) + (\rho_k/2)\|\vx - (\vv^{(k)}-\vu^{(k)})\|^2$
    \STATE $\vv^{(k+1)} = \calD_{\sigma_k}(\vx^{(k+1)}+\vu^{(k)})$, where $\sigma_k = \sqrt{\lambda/\rho_k}$
    \STATE $\vu^{(k+1)} = \vu^{(k)} + (\vx^{(k+1)}-\vv^{(k+1)})$
    \IF{$\Delta_{k+1} \ge \eta \Delta_{k}$}
    \STATE $\rho_{k+1} = \gamma \rho_k$
    \ELSE
    \STATE $\rho_{k+1} = \rho_k$
    \ENDIF
    \STATE $k = k+1$.
\ENDWHILE
\end{algorithmic}
\label{alg:algorithm1}
\end{algorithm}

\begin{remark}[Comparison with \cite{Venkatakrishnan_Bouman_Wohlberg_2013}]
In the original Plug-and-Play ADMM by Venkatakrishnan et al. \cite{Venkatakrishnan_Bouman_Wohlberg_2013}, the update scheme is $\rho_k = \rho$ for some constant $\rho$. This is valid when the denoiser $\calD_{\sigma}$ is non-expansive and has symmetric gradient. However, for general denoisers which could be expansive, the update scheme for $\rho_k$ becomes crucial to the convergence. (See discussion about non-expansiveness in Section~\ref{sec:convergence type}.)
\end{remark}

\begin{remark}[Role of $\sigma_k$]
Many denoising algorithms nowadays such as BM3D and non-local means require one major parameter \footnote{A denoising algorithm often involves many other ``internal'' parameters. However, as these internal parameters do not have direct interaction with the ADMM algorithm, in this paper we keep all internal parameters in their default settings to simplify the analysis.}, typically an estimate of the noise level, to control the strength of the denoiser. In our algorithm, the parameter $\sigma_k$ in \eref{eq:pp admm v} is reminiscent to the noise level. However, unlike BM3D and non-local means where $\sigma_k$ is directly linked to the standard deviation of the i.i.d. Gaussian noise, in Plug-and-Play ADMM we treat $\sigma_k$ simply as a tunable knob to control the amount of denoising because the residue $(\vv-\vvtilde^{(k)})$ at the $k$th iterate is not exactly Gaussian. The adoption of the Gaussian denoiser $\calD_{\sigma_k}$ is purely based on the formal equivalence between \eref{eq:ADMM2,v rewrite} and a Gaussian denoising problem.
\end{remark}

\begin{remark}[Role of $\lambda$]
In this paper, we assume that the parameter $\lambda$ is pre-defined by the user and is fixed. Its role is similar to the regularization parameter in the conventional ADMM problem. Tuning $\lambda$ can be done using external tools such as cross validation \cite{Nguyen_Milanfar_Golub_2001} or SURE \cite{Ramani_Blu_Unser_2008}.
\end{remark}

\subsection{Global and Fixed Point Convergence}
\label{sec:convergence type}
Before we discuss the convergence behavior, we clarify two types of convergence.

We refer to the type of convergence in the conventional ADMM as \emph{global convergence}, i.e., convergence in primal residue, primal objective and dual variables. To ensure global convergence, one sufficient condition is that $g$ is convex, proper and closed \cite{Boyd_Parikh_Chu_Peleato_Eckstein_2011}. For Plug-and-Play ADMM, a sufficient condition is that $\calD_{\sigma}$ has symmetric gradient and is non-expansive \cite{Sreehari_Venkatakrishnan_Wohlberg_2015}. In this case, $g$ exists due to a proximal mapping theorem of Moreau \cite{Moreau_1965}. However, proving non-expansive denoisers could be difficult as it requires
$$\|\calD_{\sigma}(\vx) - \calD_{\sigma}(\vy)\|^2 \le \kappa \|\vx - \vy\|^2$$
for \emph{any} $\vx$ and $\vy$, with $\kappa \le 1$. Even for algorithms as simple as non-local means, one can verify numerically that there exists pairs $(\vx,\vy)$ that would cause $\kappa > 1$. In the Appendix we demonstrate a counter example.

Since $\calD_{\sigma}$ can be arbitrary and we do not even know the existence of $g$, we consider fixed point convergence instead. Fixed point convergence guarantees that a nonlinear algorithm can enter into a steady state asymptotically. In nonlinear dynamical systems, these limit points are referred to as the stable-fixed-points. For any initial guess lying in a region called the basin of attraction the algorithm will converge \cite{Wiggins_1990}. For Plug-and-Play ADMM, we conjecture that fixed point convergence is the best we can ask for unless further assumptions are made on the denoisers.

\subsection{Convergence Analysis of Plug-and-Play ADMM}
\label{sec:convergence analysis}
We define the class of bounded denoisers.

\begin{definition}{(Bounded Denoiser).}
A \emph{bounded denoiser} with a parameter $\sigma$ is a function $\calD_{\sigma}: \R^n \rightarrow \R^n$ such that for any input $\vx \in \R^n$,
\begin{equation}
\|\calD_{\sigma}(\vx)-\vx\|^2/n \le \sigma^2 C,
\label{eq:bounded denoiser}
\end{equation}
for some universal constant $C$ independent of $n$ and $\sigma$.
\end{definition}

Bounded denoisers are asymptotically invariant in the sense that it ensures $\calD_{\sigma} \rightarrow \calI$ (i.e., the identity operator) as $\sigma \rightarrow 0$. It is a weak condition which we expect most denoisers to have. The asymptotic invariant property of a bounded denoiser prevents trivial mappings from being considered, e.g., $\calD_{\sigma}(\vx) = 0$ for all $\vx$.

\begin{remark}
It would be useful to compare a bounded denoiser with a ``proper denoiser'' defined in \cite{Metzler_Maleki_Baraniuk_2014}. A proper denoiser $\widetilde{\calD}_{\sigma}$ is a mapping that denoises a noisy input $\vx+\sigma\vepsilon$ with the property that
\begin{equation}
\E\left[\|\widetilde{\calD}_{\sigma}(\vx+\sigma\vepsilon)-\vx\|^2/n\right] \le \kappa \sigma^2,
\label{eq:proper denoiser}
\end{equation}
for any $\kappa < 1$, where $\vepsilon \sim \calN(0,\mI)$ is the i.i.d. Gaussian noise. Note that in \eref{eq:proper denoiser}, we require the input to be a deterministic signal $\vx$ plus an i.i.d. Gaussian noise. Moreover, the parameter must match with the noise level. In contrast, a bounded denoiser can take any input and any parameter.
\end{remark}

Besides the conditions on $\calD_{\sigma}$ we also assume that the negative log-likelihood function $f$ has bounded gradients:
\begin{assumption}
\label{assumption:bound gradient}
We assume that $f: [0,1]^n \rightarrow \R$ has bounded gradients. That is, for any $\vx \in [0,1]^n$, there exists $L < \infty$ such that $\|\nabla f(\vx)\|_2/\sqrt{n} \le L$.
\end{assumption}

\begin{example}
Let $f(\vx) = \|\mA\vx - \vy\|_2^2$ for $\mA \in \R^{n \times n}$ with eigenvalues bounded between 0 and 1. The gradient of $f$ is $\nabla f(\vx) = 2\mA^T(\mA\vx - \vy)$ and
\begin{align*}
\| \nabla f(\vx) \|_{2}/\sqrt{n}
&\le 2 \lambda_{\max}(\mA)^2(\|\vx\|_2+\|\vy\|_2)/\sqrt{n}.
\end{align*}
\end{example}

The main convergence result of this paper is as follows.

\begin{theorem}{(Fixed Point Convergence of Plug-and-Play ADMM).}
\label{thm:main}
Under Assumption~\ref{assumption:bound gradient} and for any bounded denoiser $\calD_{\sigma}$, the iterates of the Plug-and-Play ADMM defined in Algorithm~\ref{alg:algorithm1} demonstrates a fixed-point convergence. That is, there exists $(\vx^*,\vv^*,\vu^*)$ such that $\|\vx^{(k)}-\vx^*\|_2 \rightarrow 0$, $\|\vv^{(k)}-\vv^*\|_2 \rightarrow 0$ and $\|\vu^{(k)}-\vu^*\|_2 \rightarrow 0$ as $k \rightarrow \infty$.
\end{theorem}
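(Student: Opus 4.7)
The plan is to establish fixed-point convergence by first bounding the successive differences of the three iterate sequences in terms of $\rho_k$ and $\sigma_k = \sqrt{\lambda/\rho_k}$, and then using the continuation scheme to show that these bounds are summable, so that each sequence is Cauchy and hence convergent in $\R^n$.

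First, I would apply the bounded denoiser property directly to \eref{eq:pp admm v}. Since $\vv^{(k+1)} = \calD_{\sigma_k}(\vx^{(k+1)}+\vu^{(k)})$, the definition yields $\|\vv^{(k+1)} - (\vx^{(k+1)}+\vu^{(k)})\|_2 \le \sqrt{nC}\,\sigma_k$. Combined with the multiplier update \eref{eq:pp admm u}, this rearranges into $\|\vu^{(k+1)}\|_2 \le \sqrt{nC}\,\sigma_k$, so the dual variable vanishes automatically as $\sigma_k \to 0$, giving $\vu^* = \vzero$. Next, the first-order optimality condition for \eref{eq:pp admm x} reads $\nabla f(\vx^{(k+1)}) + \rho_k(\vx^{(k+1)} - \vv^{(k)} + \vu^{(k)}) = \vzero$, and Assumption~\ref{assumption:bound gradient} then gives $\|\vx^{(k+1)} - \vv^{(k)} + \vu^{(k)}\|_2 \le L\sqrt{n}/\rho_k$, so the residual between an $\vx$-update and the previous $\vv$-update is $O(1/\rho_k)$.

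Chaining these two inequalities through the triangle inequality produces bounds of the form $\|\vv^{(k+1)}-\vv^{(k)}\|_2,\; \|\vx^{(k+1)}-\vx^{(k)}\|_2,\; \|\vu^{(k+1)}-\vu^{(k)}\|_2 \le c_1\sigma_{k-1} + c_2/\rho_{k-1}$, hence $\Delta_{k+1} = O(1/\sqrt{\rho_{k-1}})$. I would convert this into summability via the continuation rule. In the monotone case $\rho_k = \gamma^k \rho_0$, so the bound decays like $\gamma^{-k/2}$ and the series is geometric. For the adaptive rule I would distinguish two sub-cases: (i) the condition $\Delta_{k+1} \ge \eta \Delta_k$ triggers infinitely often, in which case $\rho_k \to \infty$ geometrically along the accepted sub-sequence and summability of $\Delta_{k+1}$ follows from the bound above; (ii) $\rho_k$ stabilizes at some value after step $K$, in which case by the update rule we must have $\Delta_{k+1} < \eta \Delta_k$ for every $k \ge K$, so $\Delta_k \le \eta^{k-K}\Delta_K$ decays geometrically on its own. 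In either case $\sum_k \|\vx^{(k+1)}-\vx^{(k)}\|_2 < \infty$, so $\{\vx^{(k)}\}$ is Cauchy, and the same argument applies to $\{\vv^{(k)}\}$ and $\{\vu^{(k)}\}$. Completeness of $\R^n$ delivers the limits $(\vx^*,\vv^*,\vu^*)$, and as a by-product one obtains $\vu^* = \vzero$ and $\vx^* = \vv^*$.

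The main obstacle will be handling the adaptive update cleanly: a mixed history in which $\rho_k$ is sometimes multiplied by $\gamma$ and sometimes held does not fall neatly into either sub-case, and one must carefully bookkeep the current (as opposed to a stale) value of $\rho_k$ when bounding $\Delta_{k+1}$. A secondary subtlety is that Assumption~\ref{assumption:bound gradient} provides a gradient bound only on $[0,1]^n$, so either an explicit projection in the $\vx$-update is needed or a separate argument (for instance, chaining the bounded denoiser to confine $\vv^{(k)}$ into a compact set and then propagating through the closed-form $\vx$-update) must be invoked to keep every iterate inside the set on which Assumption~\ref{assumption:bound gradient} applies. Once these bookkeeping issues are resolved, the rest of the argument is a standard Cauchy-sequence completion.
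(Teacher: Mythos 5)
Your proposal follows essentially the same route as the paper's proof: the bounded-denoiser property and the bounded-gradient assumption give $\Delta_{k+1} \le C'/\sqrt{\rho_k}$, and the case analysis on the update rule (geometric growth of $\rho_k$ when the residue condition triggers, geometric decay through $\eta$ when it does not, with the mixed history handled by the bookkeeping you anticipate) yields $\Delta_{k+1} \le C''\delta^{k}$ for some $\delta<1$, hence a Cauchy sequence in $\R^{3n}$. The only slip is the incidental claim that $\vu^* = \vzero$: that holds when $\rho_k \rightarrow \infty$, but under the adaptive rule $\rho_k$ may stabilize at a finite value, in which case only $\vx^* = \vv^*$ is guaranteed.
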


\begin{proof}
See Appendix B.
\end{proof}

Intuitively, what Theorem~\ref{thm:main} states is that as $k \rightarrow \infty$, the continuation scheme forces $\rho_k \rightarrow \infty$. Therefore, the inversion in \eref{eq:pp admm x} and the denoising in \eref{eq:pp admm v} have reducing influence as $\rho_k$ grows. Hence, the algorithm converges to a fixed point. Theorem~\ref{thm:main} also ensures that $\vx^{(k)} \rightarrow \vv^{(k)}$ which is an important property of the original Plug-and-Play ADMM algorithm \cite{Sreehari_Venkatakrishnan_Wohlberg_2015}. The convergence of $\vx^{(k)} \rightarrow \vv^{(k)}$ holds because $\vu^{(k+1)} = \vu^{(k)} + (\vx^{(k+1)}-\vv^{(k+1)})$ converges. In practice, experimentally we observe that if the algorithm is terminated early to reduce the runtime, then $\vv^{(k)}$ tends to provide a slightly better solution.

\subsection{Stopping Criteria}
Since we are seeking for fixed point convergence, a natural stopping criteria is to determine if $\|\vx^{(k+1)}-\vx^{(k)}\|_2$, $\|\vv^{(k+1)}-\vv^{(k)}\|_2$ and $\|\vu^{(k+1)}-\vu^{(k)}\|_2$ are sufficiently small. Following the definition of $\Delta_{k+1}$ in \eref{eq:Delta}, we choose to terminate the iteration when
\begin{align}
\Delta_{k+1} &\bydef \frac{1}{\sqrt{n}}\Big(\|\vx^{(k+1)}-\vx^{(k)}\|_2 + \|\vv^{(k+1)}-\vv^{(k)}\|_2 \notag \\
&\quad\quad + \|\vu^{(k+1)}-\vu^{(k)}\|_2 \Big) \le \mathtt{tol}
\end{align}
for some tolerance level $\mathtt{tol}$. Alternatively, we can also terminate the algorithm when
\begin{align*}
\max\Big\{\epsilon_1, \epsilon_2, \epsilon_3\Big\} \le \mathtt{tol}/3,
\end{align*}
where $\epsilon_1 = \|\vx^{(k+1)}-\vx^{(k)}\|_2/\sqrt{n}$, $\epsilon_2 = \|\vv^{(k+1)}-\vv^{(k)}\|_2/\sqrt{n}$ and
$\epsilon_3 = \|\vu^{(k+1)}-\vu^{(k)}\|_2/\sqrt{n}$.

In practice, the tolerance level does not need to be extremely small in order to achieve good reconstruction quality. In fact, for many images we have tested, setting $\mathtt{tol} \approx 10^{-3}$ is often sufficient. \fref{fig:stopping} provides a justification. In this experiment, we tested an image super-resolution problem for 10 testing images (See Configuration 3 in Section~\ref{sec:experiment super} for details). It can be observed that the PSNR becomes steady when $\mathtt{tol}$ drops below $10^{-3}$. Moreover, size of the image does not seem to be an influencing factor. Smaller images such as \texttt{Cameraman256}, \texttt{House256} and \texttt{Peppers256} shows similar characteristics as bigger images. The more influencing factor is the combination of the update ratio $\gamma$ and the initial value $\rho_0$. However, unless $\gamma$ is close to 1 and $\rho_0$ is extremely small (which does not yield good reconstruction anyway), our experience is that setting $\mathtt{tol}$ at $10^{-3}$ is usually valid for $\gamma \in (1,2)$ and $\rho_0 \in (10^{-5},10^{-2})$.

\begin{figure}[h]
\centering
\includegraphics[width=0.9\linewidth]{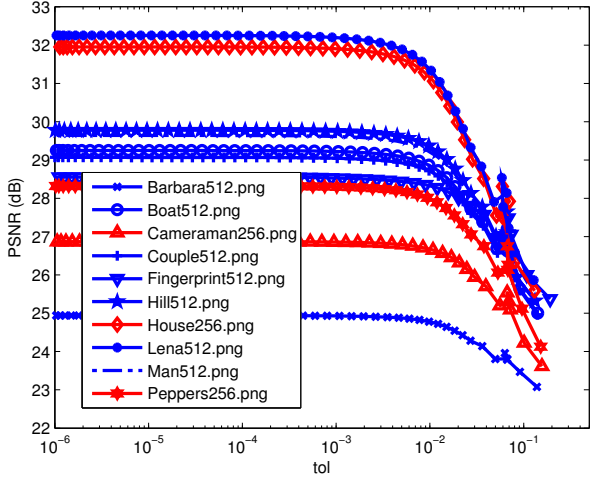}
\caption{Stopping criteria. The PSNR drops as the tolerance level increases. However, regardless of the size of the images, the PSNR becomes steady when $\mathtt{tol} \approx 10^{-3}$.}
\label{fig:stopping}
\end{figure}

\subsection{Initial Parameter $\rho_0$}
The choice of the initial parameter $\rho_0$ requires some tuning but is typically good for $\rho_0 \in (10^{-5},10^{-2})$. \fref{fig:rho0,super} shows the behavior of the algorithm for different values of $\rho_0$, ranging from $10^{0}$ to $10^{-4}$. We compare the original Plug-and-Play ADMM (i.e., with constant $\rho_k = \rho_0$, the red lines), monotone update rule (i.e., $\rho_{k+1} = \gamma \rho_k$, the blue lines), and the adaptive update rule (the black lines). We make two observations regarding the difference between the proposed algorithm and the original Plug-and-Play ADMM \cite{Sreehari_Venkatakrishnan_Wohlberg_2015}:

\begin{densitemize}
\item \textbf{Stability}: The original Plug-and-Play ADMM \cite{Sreehari_Venkatakrishnan_Wohlberg_2015} requires a highly precise $\rho_0$. For example, in \fref{fig:rho0,super} the best PSNR is achieved when $\rho_0 = 1$; When $\rho_0$ is less than $10^{-2}$, the PSNR becomes very poor. The proposed algorithm works for a much wider range of $\rho_0$.
\item \textbf{Final PSNR}: The proposed Plug-and-Play ADMM is a generalization of the original Plug-and-Play ADMM. The added degrees of freedom are the new parameters $(\rho_0,\gamma,\eta)$. The original Plug-and-Play ADMM is a special case when $\gamma = 1$. Therefore, for optimally tuned parameters, the proposed Plug-and-Play ADMM is always better than or equal to the original Plug-and-Play ADMM. This is verified in \fref{fig:rho0,super}, which shows that the best PSNR is attained by the proposed method.
\end{densitemize}

\begin{figure}[h]
\centering
\includegraphics[width=0.9\linewidth]{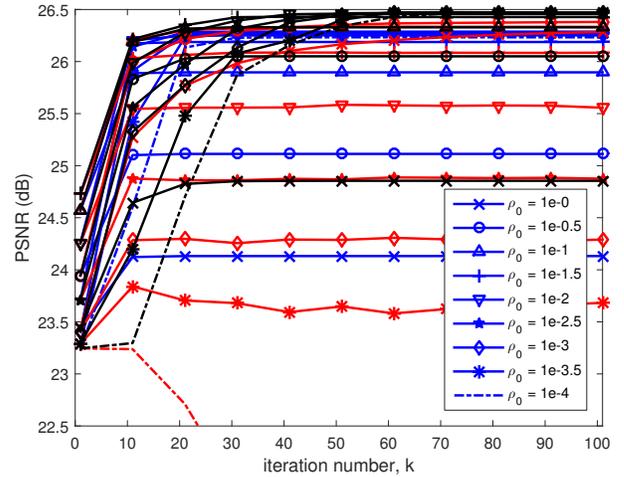}
\caption{Influence of $\rho_0$. Red curves represent the original method in \cite{Sreehari_Venkatakrishnan_Wohlberg_2015}; Blue curves represent the monotone update rule; Black curves represent the adaptive update rule. Note the diversified behavior of the red curve, which implies that a precise $\rho_0$ is required. The blue and black curves are more robust.}
\label{fig:rho0,super}
\end{figure}

\subsection{Initial Guesses}
\begin{figure}[t]
\centering
\includegraphics[width=0.9\linewidth]{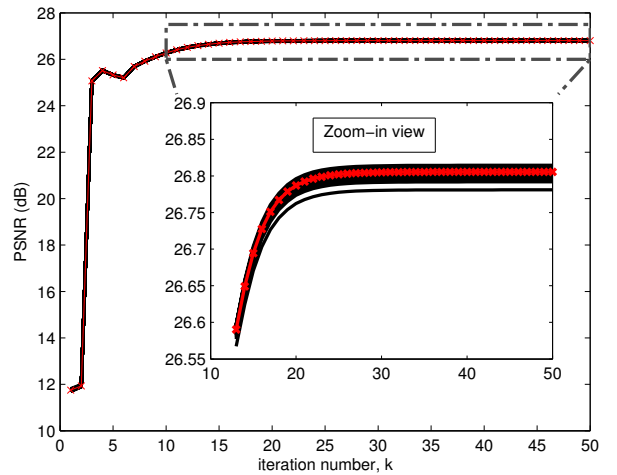}
\caption{Influence of the initial point $\vx^{(0)}$. We start the algorithm with 100 different initial guesses $\vx^{(0)}$ where each is a uniformly random vector drawn from $[0,1]^n$. Over these 100 random realizations we plot the average (red line). Note the small fluctuation of the PSNR at the limit.}
\label{fig:initial point}
\vspace{-2ex}
\end{figure}

The initial guesses $\vx^{(0)}$, $\vv^{(0)}$ and $\vu^{(0)}$ have less impact to the final PSNR. This can be seen from \fref{fig:initial point}. In this experiment, we randomly draw 100 initial guesses $\vx^{(0)}$ from a uniform distribution in $[0,1]^n$. The auxiliary variable is set as $\vv^{(0)} = \vx^{(0)}$, and the Lagrange multiplier $\vu^{(0)}$ is 0. As shown in \fref{fig:initial point}, the initial guesses do not cause significant difference in term of PSNR at the limit. The standard deviation at the limit is 0.0059 dB, implying that with 99.7\% probability (3 standard deviations) the PSNR will stay within $\pm 0.0176$ dB from its average.

\section{Applications}
\label{sec:applications}
As we discussed in the introduction, Plug-and-Play ADMM algorithm has a wide range of applications. However, in order to enable the denoising step, Plug-and-Play ADMM uses a specific variable splitting strategy. The challenge it brings, therefore, is whether we can solve the subsequent subproblems efficiently. The purpose of this section is to address this issue by presenting two applications where fast implementation can be achieved.

\subsection{Application 1: Image Super-resolution}
Image super-resolution can be described by a linear forward model with two operations: an anti-aliasing filter and a subsampling process. The function $f(\vx)$ is quadratic in the form
\begin{equation}
f(\vx) = \|\mS\mH\vx - \vy\|^2,
\label{eq:f gaussian}
\end{equation}
where the matrix $\mH \in \R^{n \times n}$ is a circulant matrix representing the convolution for the anti-aliasing filter. The matrix $\mS \in \R^{m \times n}$ is a binary sampling matrix, where the rows are subsets of the identity matrix. By defining $\mG \bydef \mS\mH$ we recognize that when substituting \eref{eq:f gaussian} into \eref{eq:ADMM2,x}, the $f$-subproblem becomes (we dropped the iteration number $k$ to simplify notation)
\begin{equation}
\vxhat = \argmin{\vx\in \R^n} \;\; \|\mG\vx-\vy\|^2 +\frac{\rho}{2} \|\vx - \vxtilde\|^2.
\end{equation}
Consequently, the solution is the pseudo-inverse
\begin{equation}
\vxhat = (\mG^T\mG + \rho\mI)^{-1}(\mG^T\vy + \rho\vxtilde).
\label{eq:pseudo inverse}
\end{equation}
For special cases of $\mH$ and $\mS$, \eref{eq:pseudo inverse} has known efficient implementation as follows.

\begin{example}[Non-blind deblurring \cite{Chan_Khoshabeh_Gibson_2011,Afonso_Bioucas-Dias_Figueiredo_2010,Wang_Yang_Yin_2008}]
Non-blind deblurring is a special case when $\mS = \mI$. In this case, since $\mH$ is circulant which is diagonalizable by the discrete Fourier transform matrices, \eref{eq:pseudo inverse} can be efficiently implemented by
\begin{equation}
\vxhat = \calF^{-1}\left\{ \frac{ \overline{\calF(h)}\calF(\vy) + \rho\calF(\vxtilde) }{ |\calF(h)|^2 + \rho} \right\},
\end{equation}
where $\calF(\cdot)$ is the Fourier transform operator, $h$ is the finite impulse response filter representing the blur kernel, $\overline{(\cdot)}$ is the complex conjugate, and the multiplication/division are element-wise operations.
\end{example}

\begin{example}[Interpolation \cite{Dahl_Hansen_Jensen_2010,Liu_Chan_Nguyen_2015,Garcia_2010,Zhou_Chen_Ren_2009}]
Image interpolation is a special case when $\mH = \mI$. In this case, since $\mS^T\mS$ is a diagonal matrix with binary entries, \eref{eq:pseudo inverse} can be efficiently implemented using an element-wise division:
\begin{equation}
\vxhat = (\mS^T\vy + \rho\vxtilde) ./ (\vs + \rho),
\end{equation}
where $\vs = \diag{\mS^T\mS}$.
\end{example}

\subsection{Polyphase Implementation for Image Super-Resolution}
When $\mG = \mS\mH$, solving the $f$-subproblem becomes non-trivial because $\mH^T\mS^T\mS\mH$ is neither diagonal nor diagonalizable by the Fourier transform. In literature, the two most common approaches are to introduce multi-variable split to bypass \eref{eq:pseudo inverse} (e.g., \cite{Afonso_Bioucas-Dias_Figueiredo_2010,Almeida_Figueiredo_2013}) or use an inner conjugate gradient to solve \eref{eq:pseudo inverse} (e.g., \cite{Brifman_Romano_Elad_2016}). However, multi-variable splitting requires additional Lagrange multipliers and internal parameters. It also generally leads to slower convergence than single-variable split. Inner conjugate gradient is computationally expensive as it requires an iterative solver. In what follows, we show that when $\mS$ is the standard $K$-fold downsampler (i.e., sub-sample the spatial grid uniformly with a factor $K$ along horizontal and vertical directions), and when $\mH$ is a circular convolution, it is possible to derive a closed-form solution \footnote{We assume the boundaries are circularly padded. In case of other types boundary conditions or unknown boundary conditions, we can pre-process the image by padding the boundaries circularly. Then, after the super-resolution algorithm we crop the center region. The alternative approach is to consider multiple variable split as discussed in  \cite{Almeida_Figueiredo_2013}. }.

Our closed form solution begins by considering the Sherman-Morrison-Woodbury identity, which allows us to rewrite \eref{eq:pseudo inverse} as
\begin{equation}
\vxhat = \rho^{-1}\vb - \rho^{-1}\mG^T(\rho\mI + \mG\mG^T)^{-1}\mG\vb,
\label{eq:f super solution 2}
\end{equation}
where $\vb \bydef \mG^T\vy + \rho\vxtilde$. Note that if $\mS \in \R^{m \times n}$ and $\mH \in \R^{n \times n}$ with $m < n$, then \eref{eq:f super solution 2} only involves a $m \times m$ inverse, which is smaller than the $n \times n$ inverse in \eref{eq:pseudo inverse}.

The more critical step is the following observation. We note that the matrix $\mG\mG^T$ is given by
\begin{equation*}
\mG\mG^T = \mS\mH\mH^T\mS^T.
\end{equation*}
Since $\mS$ is a $K$-fold downsampling operator, $\mS^T$ is a $K$-fold upsampling operator. Defining $\mHtilde = \mH\mH^T$, which can be implemented as a convolution between the blur kernel $h$ and its time-reversal, we observe that $\mS\mHtilde\mS^T$ is a ``upsample-filter-downsample'' sequence. This idea is illustrated in \fref{fig:combine gtg}.

We next study the polyphase decomposition \cite{Vaidyanathan_1992} of \fref{fig:combine gtg}. Polyphase decomposition allows us to write
\begin{equation}
\widetilde{H}(z) = \sum_{k=0}^{K-1}z^{-k}\widetilde{H}_k(z^K),
\label{eq:polyphase}
\end{equation}
where $\widetilde{H}(z)$ is the $z$-transform representation of the blur matrix $\mHtilde = \mH\mH^T$, and $\widetilde{H}_k(z^K)$ is the $k$th polyphase component of $\widetilde{H}(z)$. Illustrating \eref{eq:polyphase} using a block diagram, we show in \fref{fig:polyphase gtg} the decomposed structure of \fref{fig:combine gtg}. Then, using Noble identity \cite{Vaidyanathan_1992}, the block diagram on the left hand side of \fref{fig:polyphase gtg} becomes the one shown on the right hand side. Since for any $k>1$, placing a delay $z^{-k}$ between an upsampling and a downsampling operator leads to a zero, the overall system simplifies to a finite impulse response filter $\Htilde_0(z)$, which can be pre-computed.

We summarize this by the following proposition.
\begin{proposition}
\label{prop:polyphase}
The operation of $\mS\mH\mH^T\mS^T$ is equivalent to applying a finite impulse response filter $\Htilde_0(z)$, which is the 0th polyphase component of the filter $\mH\mH^T$.
\end{proposition}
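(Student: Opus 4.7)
The plan is to translate the matrix identity $\mS \mH \mH^T \mS^T$ into a block diagram in the $z$-transform domain, then reduce it using polyphase decomposition together with the Noble identities, which is exactly the framework set up immediately before the proposition in Figures~\ref{fig:combine gtg} and \ref{fig:polyphase gtg}. The key observation is that because $\mS$ is the standard $K$-fold downsampler and $\mS^T$ the corresponding $K$-fold upsampler, the composite operator $\mS \mHtilde \mS^T$ with $\mHtilde = \mH \mH^T$ is precisely the multirate cascade \emph{upsample by $K$} $\to$ filter $\Htilde(z)$ $\to$ \emph{downsample by $K$}.

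First I would expand $\Htilde(z)$ using its Type-I polyphase decomposition, $\Htilde(z) = \sum_{k=0}^{K-1} z^{-k} \Htilde_k(z^K)$, as already displayed in \eref{eq:polyphase}, and distribute the sum across the filter stage. This splits the cascade into $K$ parallel branches, where the $k$-th branch is $\text{up-}K \to z^{-k} \to \Htilde_k(z^K) \to \text{down-}K$. Next I would invoke the Noble identity to slide $\Htilde_k(z^K)$ through the downsampler, turning it into a standard-rate filter $\Htilde_k(z)$ placed after the downsampler. That leaves, in each branch, a pure rate-changing core of the form $\text{up-}K \to z^{-k} \to \text{down-}K$, followed by $\Htilde_k(z)$ at the output rate.

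The core of the argument is then to evaluate that rate-changing core. For $k = 0$ the cascade $\text{up-}K \to \text{down-}K$ is the identity on any input sequence, so the branch contributes exactly $\Htilde_0(z)$. For $1 \le k \le K-1$, upsampling by $K$ inserts zeros at all indices not divisible by $K$; the delay $z^{-k}$ shifts the resulting sequence so that its nonzero samples live precisely at indices $\equiv k \pmod{K}$; downsampling by $K$ then keeps only indices $\equiv 0 \pmod{K}$, which are all zero. Hence every branch with $k \ne 0$ annihilates its input, and summing across branches yields the single finite impulse response filter $\Htilde_0(z)$. Translating back to the matrix picture, $\mS \mH \mH^T \mS^T$ acts on an $m$-dimensional signal exactly as convolution by $\Htilde_0$, which can be precomputed once from $h \ast h[-\cdot]$ by extracting its samples at multiples of $K$.

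The only step requiring genuine care is verifying that the Noble identity and the ``upsample–delay–downsample equals zero'' calculation are valid in our finite-dimensional, circularly-padded setting rather than only for two-sided infinite sequences; I would handle this by noting that circular convolution and circular up/downsampling on $\Z/n\Z$ with $K \mid n$ inherit the same algebraic identities, since polyphase decomposition and Noble identities are consequences of index arithmetic modulo $K$ which carries over verbatim. That is the main obstacle, but it is essentially bookkeeping given the assumptions on $\mH$ (circulant) and $\mS$ (uniform $K$-fold sampler) stated in the preceding paragraph.
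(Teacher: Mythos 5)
Your proposal is correct and follows essentially the same route as the paper: polyphase decomposition of $\widetilde{H}(z)$, Noble identities to slide the polyphase components across the samplers, and the observation that the residual upsample--delay--downsample core vanishes for every nonzero branch index, leaving only $\Htilde_0(z)$. Your explicit index-modulo-$K$ justification of the vanishing branches and the remark about the circularly padded finite-dimensional setting are slightly more careful than the paper's treatment (which states the vanishing condition as $k>1$ where $k\ge 1$ is meant), but the argument is the same.
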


\begin{figure*}[!]
\centering
\scalebox{0.75}{
\begin{pspicture}(0,-1)(15,0.5)
\psline{-}(0,0)(0.5,0)
\rput(1,0){\upsampling{$K$}}
\psline{-}(1.5,0)(2,0)
\rput(2.75,0){\shortfilter{$\overline{H(z)}$}}
\psline{-}(3.5,0)(4,0)
\rput(4.75,0){\shortfilter{$H(z)$}}
\psline{-}(5.5,0)(6,0)
\rput(6.5,0){\downsampling{$K$}}
\psline{-}(7,0)(7.5,0)
\psframe[linewidth=1pt,framearc=0.3,linestyle=dashed](0.35,-0.75)(3.65,0.75)
\psframe[linewidth=1pt,framearc=0.3,linestyle=dashed](3.85,-0.75)(7.15,0.75)
\rput(2,-1){$\mG^T$}
\rput(5.5,-1){$\mG$}
\rput(8.25,0){$\equiv$}
\psline{-}(9,0)(9.5,0)
\rput(10,0){\upsampling{$K$}}
\psline{-}(10.5,0)(11,0)
\rput(11.75,0){\shortfilter{$\widetilde{H}(z)$}}
\psline{-}(12.5,0)(13,0)
\rput(13.5,0){\downsampling{$K$}}
\psline{-}(14,0)(14.5,0)
\end{pspicture}}
\caption{[Left] Block diagram of the operation $\mG\mG^T$. [Right] The equivalent system, where $\widetilde{H}(z) \bydef \overline{H(z)}H(z)$.}
\label{fig:combine gtg}
\end{figure*}

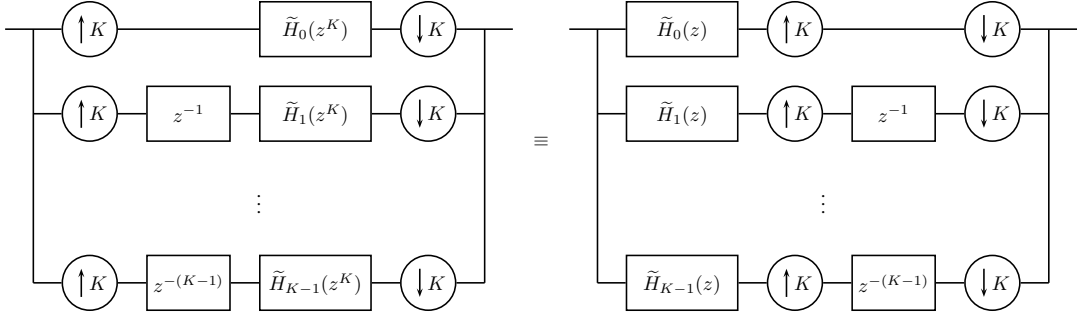
\begin{figure*}[!]
\centering
\scalebox{0.75}{
\begin{pspicture}(0,-5)(18,1)
\rput(0,0){\mybranchA{$\Htilde_0(z^K)$}{$K$}}
\rput(0,-1.5){\mybranch{$\Htilde_1(z^K)$}{$K$}{$z^{-1}$}}
\rput(4,-3){$\vdots$}
\rput(0,-4.5){\mybranch{$\Htilde_{K-1}(z^K)$}{$K$}{$z^{-(K-1)}$}}
\psline{-}(0,0)(0,-4.5)
\psline{-}(8,0)(8,-4.5)
\rput(10,0){\mybranchC{$\Htilde_0(z)$}{$K$}}
\rput(10,-1.5){\mybranchB{$\Htilde_1(z)$}{$K$}{$z^{-1}$}}
\rput(14,-3){$\vdots$}
\rput(10,-4.5){\mybranchB{$\Htilde_{K-1}(z)$}{$K$}{$z^{-(K-1)}$}}
\psline{-}(10,0)(10,-4.5)
\psline{-}(18,0)(18,-4.5)
\rput(9,-2){$\equiv$}
\end{pspicture}
}
\caption{[Left] Polyphase decomposition of $\widetilde{H}(z)$. [Right] Equivalent representation. }
\label{fig:polyphase gtg}
\end{figure*}

To implement the 0th polyphase component, we observe that it can be done by downsampling the convolved filter $\mHtilde = \mH\mH^T$. This leads to the procedure illustrated in Algorithm~\ref{alg:polyphase}.

The implication of Proposition~\ref{prop:polyphase} is that since $\mG\mG^T$ is equivalent to a finite impulse response filter $\widetilde{h}_0$,  \eref{eq:f super solution 2} can be implemented in closed-form using the Fourier transform:
\begin{equation}
\vx = \rho^{-1}\vb - \rho^{-1}\mG^T\left( \calF^{-1}\left\{\frac{\calF(\mG\vb)}{| \calF(\widetilde{h}_0)|^2 + \rho}\right\}\right),
\end{equation}
where we recall that $\vb = \mG^T\vy + \rho\vxtilde$.

\begin{algorithm}[t]
\caption{Compute the $0$th polyphase component.}
\label{alg:polyphase}
\begin{algorithmic}
\STATE Input: $h$: the blur kernel, and $K$: downsampling factor
\STATE Let $\widetilde{h} = \calF^{-1}( \calF(h) \overline{\calF(h)})$ be the convolved filter.
\STATE Output: $\widetilde{h}_0 = (\downarrow_K) (\widetilde{h})$.
\end{algorithmic}
\end{algorithm}

\begin{figure}[t]
\centering
\includegraphics[width=0.8\linewidth]{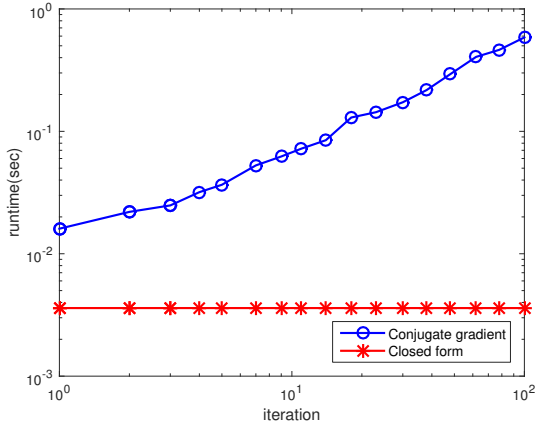}
\caption{Runtime of conjugate gradient for solving a $\vx$-subproblem. Note the non-iterative nature of the closed-form solution.}
\label{fig:runtime}
\end{figure}

The effectiveness of the proposed closed-form solution can be seen from \fref{fig:runtime}. In this figure, we compare with a brute force conjugate gradient method presented in \cite{Brifman_Romano_Elad_2016}. When $\mH$ satisfies periodic boundary conditions, the closed-form solution is \emph{exact}. If the boundaries are not periodic, alternative solutions can be considered, e.g., \cite{Matakos_Ramani_Fessler_2013}.

\subsection{Application 2: Single Photon Imaging}
The second application is a single photon imaging problem using quanta image sensors (QIS) \cite{Fossum_2011}. Using ADMM for QIS was previously reported in \cite{Chan_Lu_2014,Elgendy_Chan_2016}. Here, we show how the Plug-and-Play ADMM can be used for the problem.

QIS is a spatial oversampling device. A QIS is composed to many tiny single photon detectors called jots. In each unit space, $K$ jots are used to acquire light corresponding to a pixel in the usual sense (e.g., a pixel in a CMOS sensor). Therefore, for an image of $n$ pixels, a total number of $nK$ jots are required. By assuming homogeneous distribution of the light within each pixel, we consider a simplified QIS imaging model which relates the underlying image $\vx \in \R^n$ and the actual photon arrival rate at the jots $\vs \in \R^{nK}$ as
\begin{equation*}
\vs = \alpha\mG\vx,
\end{equation*}
\noindent where the matrix $\mG \in \R^{nK \times n}$ is
\begin{equation}
\mG = \frac{1}{K} \;
\begin{bmatrix}
\vone_{K \times 1} & \vzero_{K \times 1} & \ldots & \vzero_{K \times 1}\\
\vzero_{K \times 1}& \vone_{K \times 1}  & \ldots & \vzero_{K \times 1}\\
\vdots & \vdots & \ddots & \vdots \\
\vzero_{K \times 1}& \vzero_{K \times 1} & \ldots & \vone_{K \times 1}
\end{bmatrix},
\label{eq:G single photon}
\end{equation}
and $\alpha$ is a sensor gain. Given $\vs$, the photons arriving at the sensors follow a Poisson distribution with a rate given by $\vs$. Let $Z_i$ be the random variable denoting the number of photons at jot $i$, we have
\begin{equation}
p(z_i) = \frac{s_i^{-z_i}e^{-s_i}}{z_i!}, \quad i = 1,\ldots,nK.
\end{equation}
The final QIS output, $Y_i$, is a binary bit resulted from truncating $Z_i$ using a threshold $q$. That is,
\begin{equation*}
Y_i =
\begin{cases}
1, &\quad \mbox{if } Z_i \ge q,\\
0, &\quad \mbox{if } Z_i < q.
\end{cases}
\end{equation*}
When $q = 1$, the probability of observing $Y_i = y_i$ given $s_i$ is
\begin{equation*}
p( y_i \,|\, s_i) =
\begin{cases}
e^{-s_i},   &\quad \mbox{if } y_i = 0,\\
1-e^{-s_i}, &\quad \mbox{if } y_i = 1.
\end{cases}
\end{equation*}

The recovery goal is to estimate $\vx$ from the observed binary bits $\vy$. Taking the negative log and summing over all pixels, the function $f$ is defined as
\begin{align}
f(\vx)
&\bydef p(\vy \,|\, \vs) = \sum_{i=1}^{nK} -\log \; p( y_i \,|\, s_i) \notag \\
&= \sum_{i=1}^{nK} -\log\left( (1-y_i)e^{-s_i} + y_i\left(1-e^{-s_i}\right) \right) \notag \\
&= \sum_{j=1}^n - K_j^0\log(e^{-\frac{\alpha x_j}{K}}) - K_j^1\log(1-e^{-\frac{\alpha x_j}{K}}) ,
\label{eq:single photon f}
\end{align}
where $K_j^1 = \sum_{i=1}^K y_{(j-1)K+i}$ is the number of ones in the $j$th unit pixel, and $K_j^0 = \sum_{i=1}^K (1-y_{(j-1)K+i})$ is the number of zeros in the $j$th unit pixel. (Note that for any $j$, $K^1_j+K^0_j = K$.) Consequently, substituting \eref{eq:single photon f} into \eref{eq:pp admm x} yields the $f$-subproblem
\begin{equation}
\min_{\vx} \sum_{j=1}^n - K_j^0\log(e^{-\frac{\alpha x_j}{K}}) - K_j^1\log(1-e^{-\frac{\alpha x_j}{K}}) + \frac{\rho}{2}(x_j-\widetilde{x}_j)^2.
\label{eq:single photon f subproblem}
\end{equation}
Since this optimization is separable, we can solve each individual variable $x_j$ independently. Thus, for every $j$, we solve a single-variable optimization by taking derivative with respect to $x_j$ and setting to zero, yielding
\begin{equation*}
Ke^{-\frac{\alpha x_j}{K}}(\alpha+\rho(x_j-\widetilde{x}_j)) = \alpha K^0_j + \rho K(x_j-\widetilde{x}_j),
\end{equation*}
which is a one-dimensional root finding problem. By constructing an offline lookup table in terms of $K_0$, $\rho$ and $\widetilde{x}_j$, we can solve \eref{eq:single photon f subproblem} efficiently.

\section{Experimental Results}
\label{sec:experiment}
In this section we present the experimental results. For consistency we use BM3D in all experiments, although other bounded denoisers will also work. We shall not compare Plug-and-Play ADMM using different denoisers as it is not the focus of the paper.

\subsection{Image Super-Resolution}
\label{sec:experiment super}
We consider a set of 10 standard test images for this experiment as shown in \fref{fig:example}. All images are gray-scaled, with sizes between $256 \times 256$ and $512 \times 512$. Four sets of experimental configurations are studied, and are shown in Table~\ref{table:configration}.

\begin{figure}[h]
\centering
\includegraphics[width=0.9\linewidth]{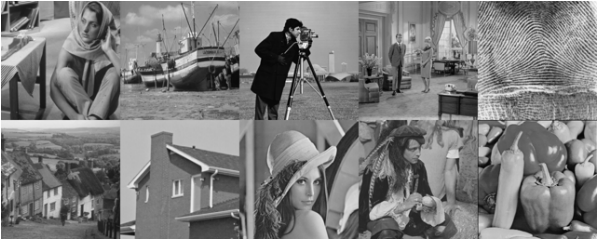}
\caption{10 testing images for the experiment.}
\label{fig:example}
\end{figure}

\begin{table}[h]
\caption{Configurations and parameters.}
\label{table:configration}
\begin{tabular}{cl}
Config & Description \\
\hline
1 & $K = 2$, $\mH = $ bicubic, Noise = 0\\
2 & $K = 4$, $\mH = $ bicubic, Noise = 0\\
  & $\rho_0 = 1.3\times10^{-5}$, $\gamma = 2.5$, $\lambda = 10^{-5}$\\
  \hline
3 & $K = 2$, $\mH = $ Gaussian of std 1, Noise = 5/255\\
4 & $K = 4$, $\mH = $ Gaussian of std 1, Noise = 5/255\\
  & $\rho_0 = 1\times10^{-5}$, $\gamma = 1.2$, $\lambda = 10^{-4}$\\
\hline
\end{tabular}
\end{table}

\begin{table*}[!]
\small
\centering
\caption{Image Super Resolution Results. When noise is present, the PSNR values are averaged over 5 random realizations of the noise pattern. Gray color rows represent external database methods.}
\label{table:SR result}
\begin{tabular}{c|cccccccccccc}
& \multicolumn{10}{c}{Images} &  & \\
\hline
        & 1         & 2         & 3         & 4         & 5         & 6         & 7         & 8         & 9         & 10        &  Dataset    & Avg STD \\
Size    & $512^2$   & $512^2$   & $256^2$   & $512^2$   & $512^2$   & $512^2$   & $256^2$   & $512^2$   & $512^2$   & $256^2$   &  Avg      & per image\\
\hline
\multicolumn{13}{c}{Factor: $\times 2$; Anti-aliasing Filter: Bicubic; Noise: 0} \\
\Xhline{3\arrayrulewidth}
\rowcolor{Gray}
DCNN\cite{Dong_Loy_He_2014}   	&	\multicolumn{1}{!{\vrule width 0.8pt}c!}{25.71} 	&	31.83	&	28.79	&	31.13	&	32.73	&	 \textbf{32.60}     	&	35.11	&	36.34	&	 \textbf{33.10}     	&	33.05	&	32.04	& -- \\
\rowcolor{Gray}
SR \cite{Yang_Wright_Huang_2008}     	&	\multicolumn{1}{!{\vrule width 0.8pt}c!}{\textbf{25.87}}     	&	31.51	&	27.92	&	30.94	&	33.02	&	32.46	&	34.79	&	36.14	&	32.80	&	32.67	&	31.81	 &--\\
\rowcolor{Gray}
SPSR \cite{Peleg_Elad_2014}     	&	\multicolumn{1}{!{\vrule width 0.8pt}c!}{25.71}	&	31.49	&	27.85	&	31.00	&	33.30	&	32.35	&	34.37	&	36.18	&	32.67	&	32.66	&	31.76	 &--\\
\rowcolor{White}
TSE \cite{Huang_Singh_Ahuia_2015}     	&	\multicolumn{1}{!{\vrule width 0.8pt}c!}{25.66}	&	31.64	&	28.17	&	31.01	&	32.88	&	32.45	&	34.78	&	36.22	&	32.88	&	 \textbf{33.29}     	&	31.90	 &--\\
\rowcolor{White}
GPR \cite{He_Siu_2011}     	&	\multicolumn{1}{!{\vrule width 0.8pt}c!}{24.99}	&	29.99	&	26.44	&	29.50	&	30.36	&	31.33	&	33.08	&	34.19	&	31.09	&	30.98	&	30.20	 &--\\
\rowcolor{White}
Ours - M      	&	\multicolumn{1}{!{\vrule width 0.8pt}c!}{25.87}	&	 \textbf{31.85}     	&	 \textbf{28.81}     	&	 \textbf{31.42}     	&	 \textbf{33.63}     	&	32.56	&	 \textbf{35.30}     	&	 \textbf{36.45}     	&	32.86	&	33.06	&	 \textbf{32.18}     	& --\\
\rowcolor{White}
Ours - A      	&	\multicolumn{1}{!{\vrule width 0.8pt}c!}{25.74}	&	31.68	&	28.44	&	31.27	&	33.49	&	32.39	&	35.20	&	36.02	&	32.65	&	32.65	&	31.95	& --\\
\hline
\multicolumn{13}{c}{Factor: $\times 4$; Anti-aliasing Filter: Bicubic; Noise: 0} \\
\Xhline{3\arrayrulewidth}
\rowcolor{Gray}
DCNN \cite{Dong_Loy_He_2014}     	&	\multicolumn{1}{!{\vrule width 0.8pt}c!}{23.93}	&	26.84	&	23.76	&	26.08	&	24.18	&	28.32	&	29.48	&	30.45	&	28.17	&	27.88	&	26.91	&	 --\\									
\rowcolor{Gray}		
SR \cite{Yang_Wright_Huang_2008}    	&	\multicolumn{1}{!{\vrule width 0.8pt}c!}{23.91}	&	26.39	&	23.43	&	26.02	&	24.44	&	28.17	&	29.15	&	30.19	&	27.94	&	27.42	&	26.71	&	 --\\		
\rowcolor{Gray}									
SPSR \cite{Peleg_Elad_2014}     	&	\multicolumn{1}{!{\vrule width 0.8pt}c!}{23.90}	&	26.49	&	23.42	&	26.02	&	25.11	&	28.14	&	29.22	&	30.24	&	27.85	&	27.62	&	26.80	&	 --\\				\rowcolor{White}
TSE \cite{Huang_Singh_Ahuia_2015}     	&	\multicolumn{1}{!{\vrule width 0.8pt}c!}{23.90}	&	26.62	&	23.83	&	26.10	&	24.70	&	28.06	&	30.03	&	30.29	&	28.03	&	27.97	&	26.95	&	 --\\			
\rowcolor{White}								
GPR \cite{He_Siu_2011}     	&	\multicolumn{1}{!{\vrule width 0.8pt}c!}{23.55}	&	25.47	&	22.54	&	25.27	&	22.33	&	27.79	&	27.61	&	28.74	&	26.76	&	25.79	&	25.58	&	 --\\						
\rowcolor{White}					
Ours - M      	&	 \multicolumn{1}{!{\vrule width 0.8pt}c!}{\textbf{23.99}}     	&	26.87	&	23.82	&	 \textbf{26.32}     	&	25.54	&	 \textbf{28.35}     	&	30.16	&	 \textbf{30.74}     	&	 \textbf{28.17}     	&	 \textbf{28.26}     	&	27.22	&	 --\\		
\rowcolor{White}									
Ours - A      	&	\multicolumn{1}{!{\vrule width 0.8pt}c!}{23.99}	&	 \textbf{26.87}     	&	\textbf{23.83}     	&	26.33	&	 \textbf{25.58}     	&	28.29	&	  \textbf{30.48}   	&	30.62	&	28.12	&	28.22	&	  \textbf{27.23}  	&	 --\\											
\hline
\multicolumn{13}{c}{Factor: $\times 2$; Anti-aliasing Filter: Gaussian $9 \times 9$, $\sigma = 1$; Noise: $5/255$} \\
\Xhline{3\arrayrulewidth}
\rowcolor{Gray}
DCNN \cite{Dong_Loy_He_2014}     	&	\multicolumn{1}{!{\vrule width 0.8pt}c!}{23.61}	&	26.30	&	23.75	&	26.21	&	23.79	&	27.50	&	27.84	&	28.15	&	27.05	&	26.07	&	26.03	&	 0.0219\\					
\rowcolor{Gray}						
SR \cite{Yang_Wright_Huang_2008}     	&	\multicolumn{1}{!{\vrule width 0.8pt}c!}{23.61}	&	26.25	&	23.71	&	26.15	&	23.80	&	27.41	&	27.71	&	28.07	&	26.99	&	26.10	&	25.98	&	 0.0221\\			
\rowcolor{Gray}								
SPSR \cite{Peleg_Elad_2014}     	&	\multicolumn{1}{!{\vrule width 0.8pt}c!}{23.75}	&	26.57	&	23.88	&	26.47	&	23.91	&	27.80	&	28.19	&	28.58	&	27.33	&	26.39	&	26.29	&	 0.0208\\			\rowcolor{White}			
TSE \cite{Huang_Singh_Ahuia_2015}     	&	\multicolumn{1}{!{\vrule width 0.8pt}c!}{23.57}	&	26.22	&	23.65	&	26.12	&	23.79	&	27.34	&	27.55	&	28.00	&	26.93	&	26.11	&	25.93	&	 0.0238\\		
\rowcolor{White}									
GPR \cite{He_Siu_2011}     	&	\multicolumn{1}{!{\vrule width 0.8pt}c!}{23.82}	&	26.81	&	23.91	&	26.63	&	24.05	&	28.38	&	29.16	&	29.54	&	27.78	&	26.76	&	26.68	&	 0.0170\\			
\rowcolor{White}								
Ours - M      	&	 \multicolumn{1}{!{\vrule width 0.8pt}c!}{\textbf{24.64}}     	&	\textbf{29.41}     	&	 \textbf{26.73}     	&	 \textbf{29.22}     	&	 \textbf{28.82}     	&	 \textbf{29.82}     	&	 \textbf{32.65}     	&	 \textbf{32.76}     	&	 \textbf{29.66}     	&	 \textbf{30.10}     	&	 \textbf{29.38}   	&	 0.0267\\	
\rowcolor{White}										
Ours - M*     	&	\multicolumn{1}{!{\vrule width 0.8pt}c!}{24.01}	&	27.09	&	24.17	&	27.00	&	25.03	&	28.46	&	29.32	&	29.78	&	27.85	&	26.99	&	26.97	&	 0.0178\\											
\hline
\multicolumn{13}{c}{Factor: $\times 4$; Anti-aliasing Filter: Gaussian $9 \times 9$, $\sigma = 1$; Noise: $5/255$} \\
\hline
\rowcolor{Gray}
DCNN \cite{Dong_Loy_He_2014}     	&	\multicolumn{1}{!{\vrule width 0.8pt}c!}{20.72}	&	21.30	&	18.91	&	21.68	&	16.10	&	23.39	&	22.33	&	22.99	&	22.46	&	20.23	&	21.01	&	 0.0232\\						
\rowcolor{Gray}					
SR \cite{Yang_Wright_Huang_2008}     	&	\multicolumn{1}{!{\vrule width 0.8pt}c!}{20.67}	&	21.30	&	18.86	&	21.51	&	16.37	&	23.15	&	22.19	&	22.85	&	22.26	&	20.33	&	20.95	&	 0.0212\\			
\rowcolor{Gray}								
SPSR \cite{Peleg_Elad_2014}     	&	\multicolumn{1}{!{\vrule width 0.8pt}c!}{20.85}	&	21.58	&	19.18	&	21.85	&	16.59	&	23.52	&	22.42	&	23.05	&	22.53	&	20.50	&	21.21	&	 0.0217\\			\rowcolor{White}								
TSE \cite{Huang_Singh_Ahuia_2015}     	&	\multicolumn{1}{!{\vrule width 0.8pt}c!}{20.59}	&	21.24	&	18.80	&	21.49	&	16.40	&	23.14	&	22.21	&	22.78	&	22.21	&	20.30	&	20.92	&	 0.0252\\		
\rowcolor{White}									
GPR \cite{He_Siu_2011}     	&	\multicolumn{1}{!{\vrule width 0.8pt}c!}{21.55}	&	22.68	&	19.90	&	22.77	&	17.70	&	24.57	&	23.51	&	24.37	&	23.63	&	21.35	&	22.20	&	 0.0313\\	
\rowcolor{White}										
Ours - M      	&	 \multicolumn{1}{!{\vrule width 0.8pt}c!}{\textbf{23.62}}     	&	 \textbf{25.75}    	&	 \textbf{23.06}     	&	 \textbf{25.30}     	&	 \textbf{24.48}     	&	 \textbf{27.17}    	&	 \textbf{29.14}     	&	 \textbf{29.42}     	&	 \textbf{26.86}     	&	 \textbf{26.86}     	&	 \textbf{26.17}   	&	 0.0223\\		
\rowcolor{White}									
Ours - M*     	&	\multicolumn{1}{!{\vrule width 0.8pt}c!}{21.21}	&	22.12	&	19.43	&	22.43	&	16.90	&	24.37	&	23.13	&	23.95	&	23.39	&	21.13	&	21.81	&	 0.0253\\											
\hline
\end{tabular}
\end{table*}

\begin{figure*}[!]
\centering
\footnotesize
\begin{tabular}{cccc}
\includegraphics[width=0.2\linewidth]{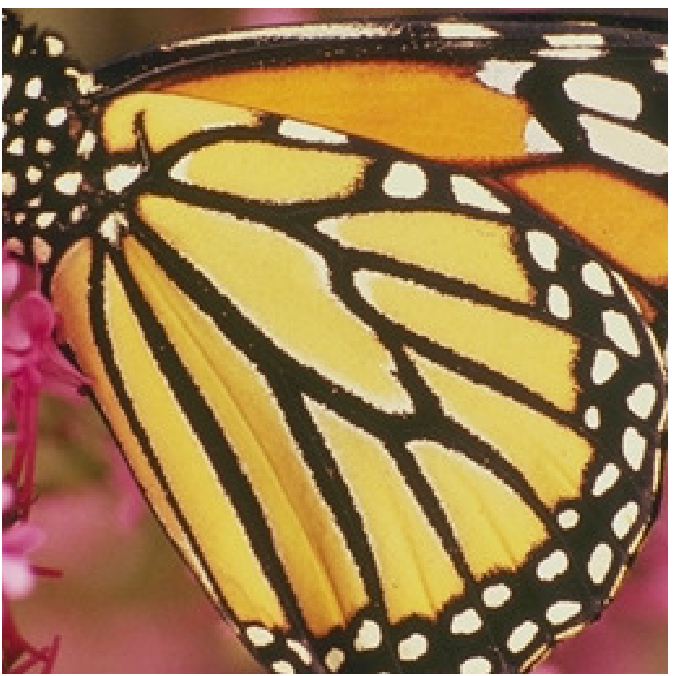}&
\includegraphics[width=0.2\linewidth]{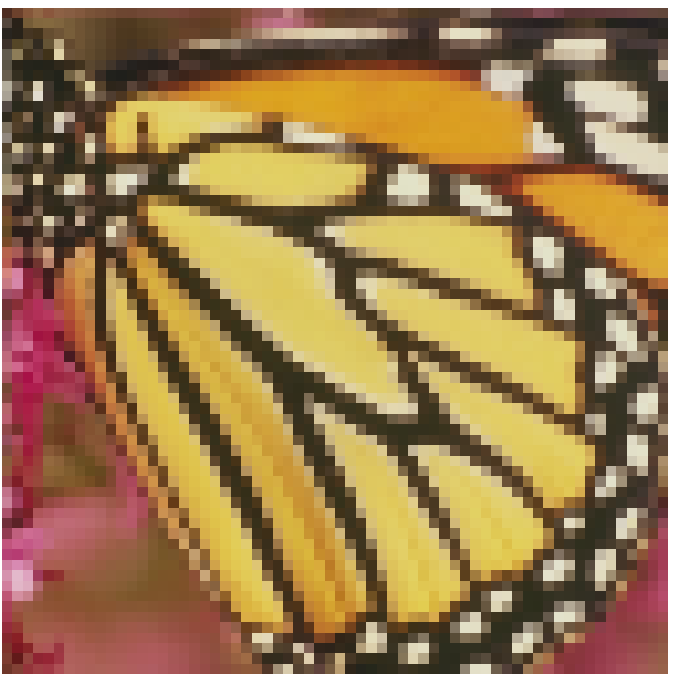}&
\includegraphics[width=0.2\linewidth]{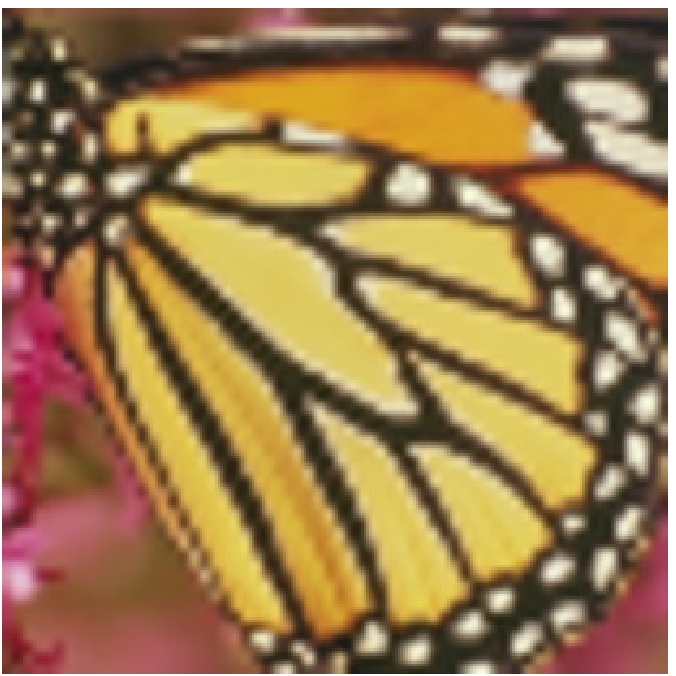}&
\includegraphics[width=0.2\linewidth]{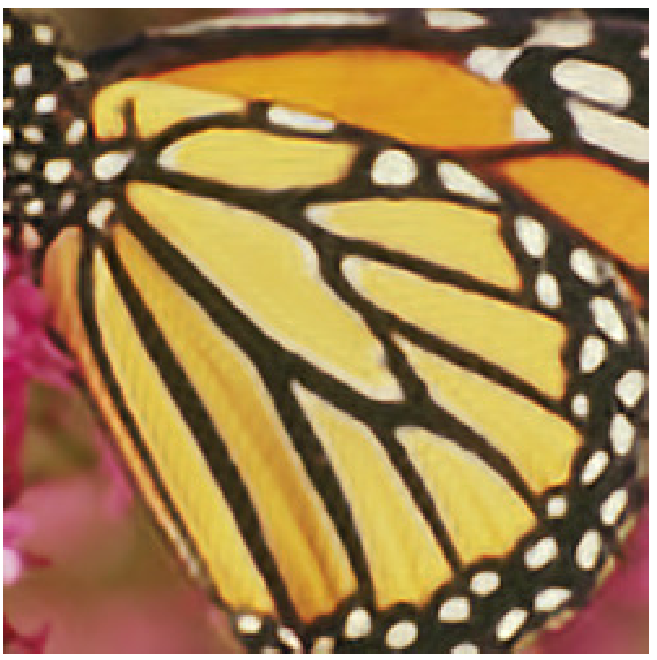}\\
\includegraphics[width=0.2\linewidth]{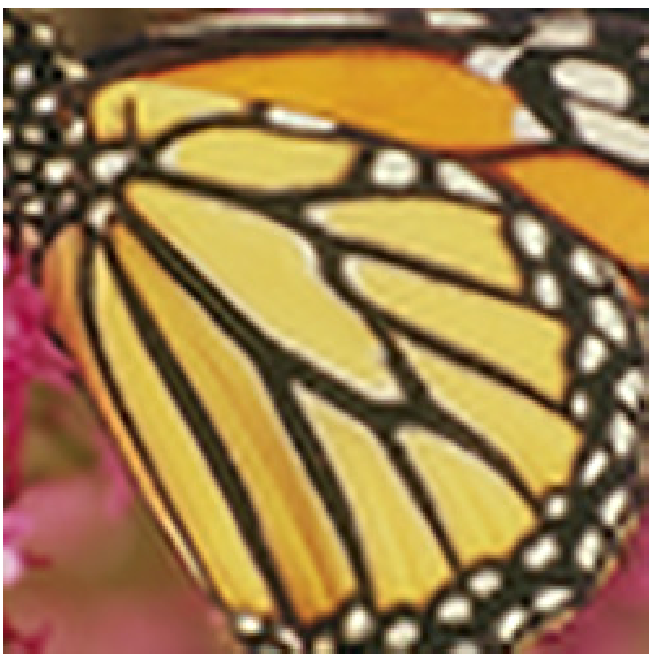}&
\includegraphics[width=0.2\linewidth]{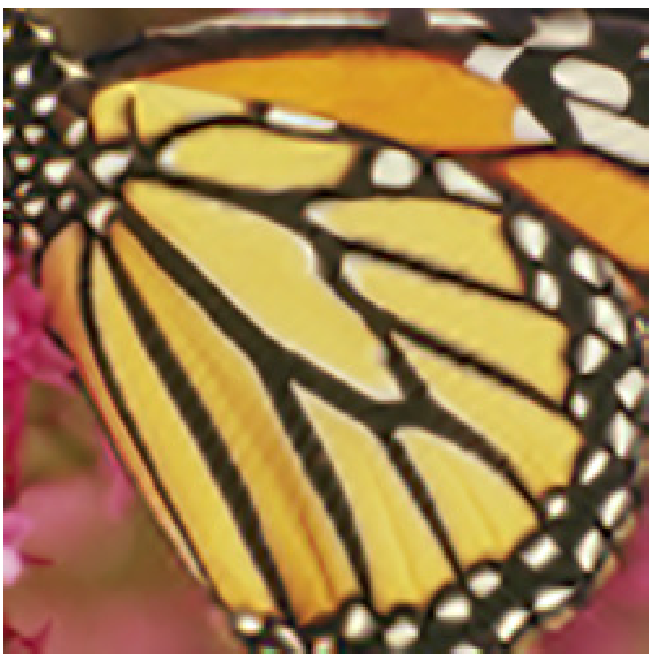}&
\includegraphics[width=0.2\linewidth]{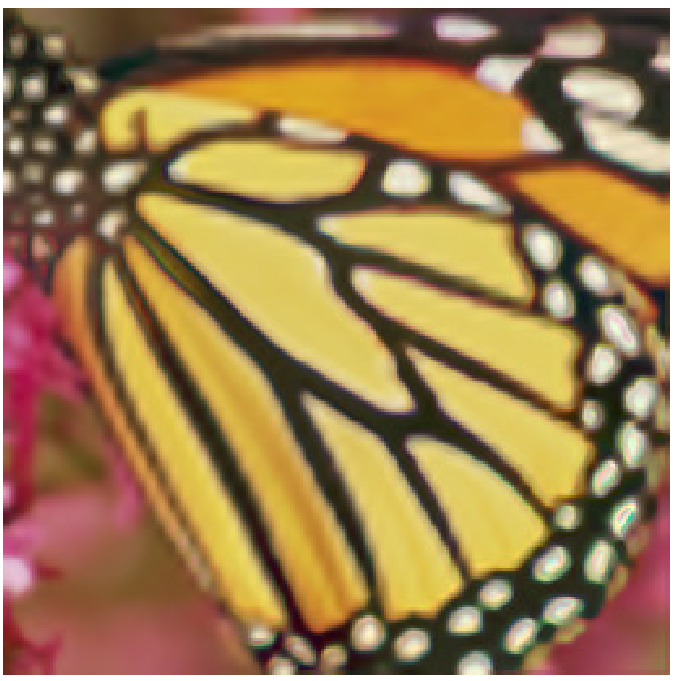}&
\includegraphics[width=0.2\linewidth]{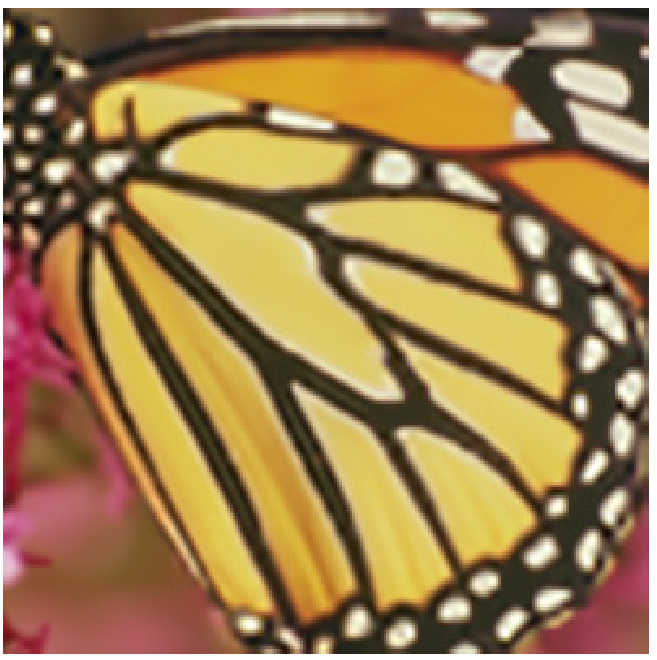}\\
\end{tabular}
\caption{Image Super Resolution Results. [Top](from left to right). Ground truth; The low resolution input; Bicubic interpolation; DCNN \cite{Dong_Loy_He_2014} (24.19dB). [Bottom](from left to right). SPSR \cite{Peleg_Elad_2014} (22.44dB); TSE \cite{Huang_Singh_Ahuia_2015} (22.80dB); GPR \cite{He_Siu_2011} (20.81dB); Ours-M (23.49dB).}
\label{fig:SR images 1}
\end{figure*}

We compared the proposed algorithm with several existing super-resolution algorithms. These methods include the deep convolutional neural network method (DCNN) by Dong et al. \cite{Dong_Loy_He_2014}, the statistical patch-based sparse representation method (SPSR) by Peleg and Elad \cite{Peleg_Elad_2014}, the transformed self-exemplar method (TSE) by Huang et al. \cite{Huang_Singh_Ahuia_2015}, the classical sparse representation method for super-resolution (SR) by Yang et al. \cite{Yang_Wright_Huang_2008}, and the Gaussian process regression method (GPR) by He and Siu \cite{He_Siu_2011}. Among these methods, we note that TSE and GPR are single image methods whereas DCNN, SPSR and SR require training using external databases.

The results of the experiment are shown in Table~\ref{table:SR result}. For the proposed algorithm, we present the two update rules as Ours-M (for monotone update rule), and Ours-A (for adaptive update rule). When noise is present in the simulation, we conduct a Monte-Carlo simulation over 5 random realizations. In this case, the reported PSNR values are the average over the random realizations. The per image standard deviation is reported in the last column of Table~\ref{table:SR result} (if applicable).

For configurations 3 and 4 when we use a Gaussian anti-aliasing filter, we observe that not all existing methods can handle such case as the training part of those algorithms was performed on a bicubic model. Therefore, for fair comparison, we present two versions of the proposed algorithm. The first version Ours-M assumes the correct knowledge about the Gaussian filter, whereas the second version Ours-M* ignores such assumption and use the bicubic model for reconstruction.

From the PSNR results shown in Table~\ref{table:SR result}, we observe very similar performance of the competing methods. For configurations 1 and 2, the proposed algorithm shows the best performance overall, although in some occasions the deep neural network \cite{Dong_Loy_He_2014} is better. For configurations 3 and 4, we observe a significant gap between Ours-M and the competing methods. This is caused by the model mismatch of the competing methods as the implementations provided by the authors only support the bicubic model. For fairness, we consider Ours-M* by pretending that the anti-aliasing filter is bicubic. In this case, Ours-M* still performs better than the others for configuration 3, but slightly worse than GPR \cite{He_Siu_2011} for configuration 4.

For visual comparison we conduct a color image experiment. In this experiment, we simulate a low resolution image by downsampling the color image by a factor 4 using a bicubic anti-aliasing filter. Then, we apply the proposed algorithm to the three color channels individually to recover the image. The result is shown in \fref{fig:SR images 1}. As seen, the proposed method produces better results than SPSR \cite{Peleg_Elad_2014}, TSE \cite{Huang_Singh_Ahuia_2015} and GPR \cite{He_Siu_2011}, with slightly sharper edges and less halo artifacts. We also observe that the deep neural network \cite{Dong_Loy_He_2014} shows better results than that in Table~\ref{table:SR result}. One possibility is that the training data used to train the neural network are natural images that have better correlation to \fref{fig:SR images 1}. However, considering the training-free nature of the proposed Plug-and-Play algorithm, losing to a well-trained neural network is not surprising.

\begin{table*}[!]
\small
\centering
\caption{Single Photon Imaging Results. The PSNR values are averaged over 8 random realizations of the photon arrivals.}
\label{table:single photon}
\begin{tabular}{c|cccccccccccc}
& \multicolumn{10}{c}{Images} &  & \\
\hline
       & 1         & 2         & 3         & 4         & 5         & 6         & 7         & 8         & 9         & 10        &  Dataset      & Avg STD \\
Size   & $512^2$   & $512^2$   & $256^2$   & $512^2$   & $512^2$   & $512^2$   & $256^2$   & $512^2$   & $512^2$   & $256^2$   &  Avg      & per image\\
\hline
\multicolumn{13}{c}{ $K = 4$ } \\
\hline
Yang et al. \cite{Yang_Lu_Sbaiz_2012}  & 14.80 & 14.18 & 14.68 & 14.39 & 14.28 & 14.90 & 14.21 & 14.48 & 14.78 & 14.59 & 14.53 & 0.0157 \\
Chan-Lu \cite{Chan_Lu_2014}      &22.59 & 24.76 & 23.63 & 24.74 & 21.47 & 25.65 & 25.38 & 26.42 & 25.35 & 24.78 & 24.48 & 0.0425\\
Ours-M        &25.99 & 25.72 & 25.58 & 26.03 & 23.54 & 26.60 & 28.15 & 28.17 & 26.17 & 26.10 & 26.20 & 0.0821\\
Ours-A        &\textbf{26.06} & \textbf{25.75} & \textbf{25.64} & \textbf{26.10} & \textbf{23.59} & \textbf{26.66} & \textbf{28.27} & \textbf{28.27 }& \textbf{26.18} & \textbf{26.16} & \textbf{26.27} & 0.0801\\
\hline
\multicolumn{13}{c}{ $K = 6$ } \\
\hline
Yang et al. \cite{Yang_Lu_Sbaiz_2012}  &17.94 & 17.27 & 17.67 & 17.61 & 17.29 & 18.22 & 17.22 & 17.62 & 18.00 & 17.65 & 17.65 & 0.0147\\
Chan-Lu \cite{Chan_Lu_2014}      &23.97 & 26.25 & 25.53 & 26.26 & 24.47 & 26.66 & 26.75 & 27.32 & 26.58 & 26.40 & 26.02 & 0.0339\\
Ours-M        & \textbf{28.34} & \textbf{27.76} & 27.60 & \textbf{27.91} & \textbf{25.66} & \textbf{28.30} & \textbf{30.34} & \textbf{30.06} & \textbf{27.75} & 28.15 & \textbf{28.19} & 0.0451\\
Ours-A        & \textbf{28.34} & 27.72 & \textbf{27.61} & 27.84 & 25.62 & 28.25 & 30.28 & 29.86 & 27.71 & \textbf{28.16} & 28.14 & 0.0472\\
\hline
\multicolumn{13}{c}{ $K = 8$ } \\
\hline
Yang et al. \cite{Yang_Lu_Sbaiz_2012}  & 20.28 & 19.68 & 20.00 & 20.05 & 19.53 & 20.64 & 19.49 & 19.99 & 20.42 & 19.97 & 20.01 & 0.0183\\
Chan-Lu \cite{Chan_Lu_2014}      & 25.14 & 27.09 & 26.56 & 27.24 & 25.75 & 27.55 & 27.17 & 27.89 & 27.49 & 27.07 & 26.90 & 0.0325\\
Ours-M        & \textbf{29.79} & \textbf{29.07} & \textbf{29.14} & \textbf{29.25} & \textbf{27.19} & \textbf{29.55} & \textbf{31.70} & \textbf{31.43} & \textbf{28.99} & \textbf{29.52} & \textbf{29.56} & 0.0527\\
Ours-A        &29.74 & 29.00 & 29.09 & 29.16 & 27.14 & 29.51 & 31.54 & 31.35 & 28.94 & 29.43 & 29.49 &  0.0520\\
\hline
\multicolumn{13}{c}{ $K = 10$ } \\
\hline
Yang et al. \cite{Yang_Lu_Sbaiz_2012}  & 22.14 & 21.60 & 21.89 & 21.98 & 21.32 & 22.51 & 21.32 & 21.86 & 22.35 & 21.83 & 21.88 & 0.0198\\
Chan-Lu \cite{Chan_Lu_2014}      &26.20 & 27.57 & 27.26 & 27.70 & 26.41 & 27.99 & 27.64 & 28.16 & 27.95 & 27.66 & 27.46 & 0.0264\\
Ours-M        & \textbf{30.88} & \textbf{30.19} & \textbf{30.34} & \textbf{30.31} & \textbf{28.29} & \textbf{30.48} & \textbf{32.68} & \textbf{32.29} & \textbf{29.97} & \textbf{30.56} & \textbf{30.60} & 0.0386\\
Ours-A        &30.81 & 30.12 & 30.31 & 30.22 & 28.22 & 30.41 & 32.51 & 32.17 & 29.90 & 30.47 & 30.51 & 0.0397\\
\hline
\end{tabular}
\end{table*}

\begin{figure*}[!]
\centering
\footnotesize
\begin{tabular}{cccc}
\includegraphics[width=0.23\linewidth]{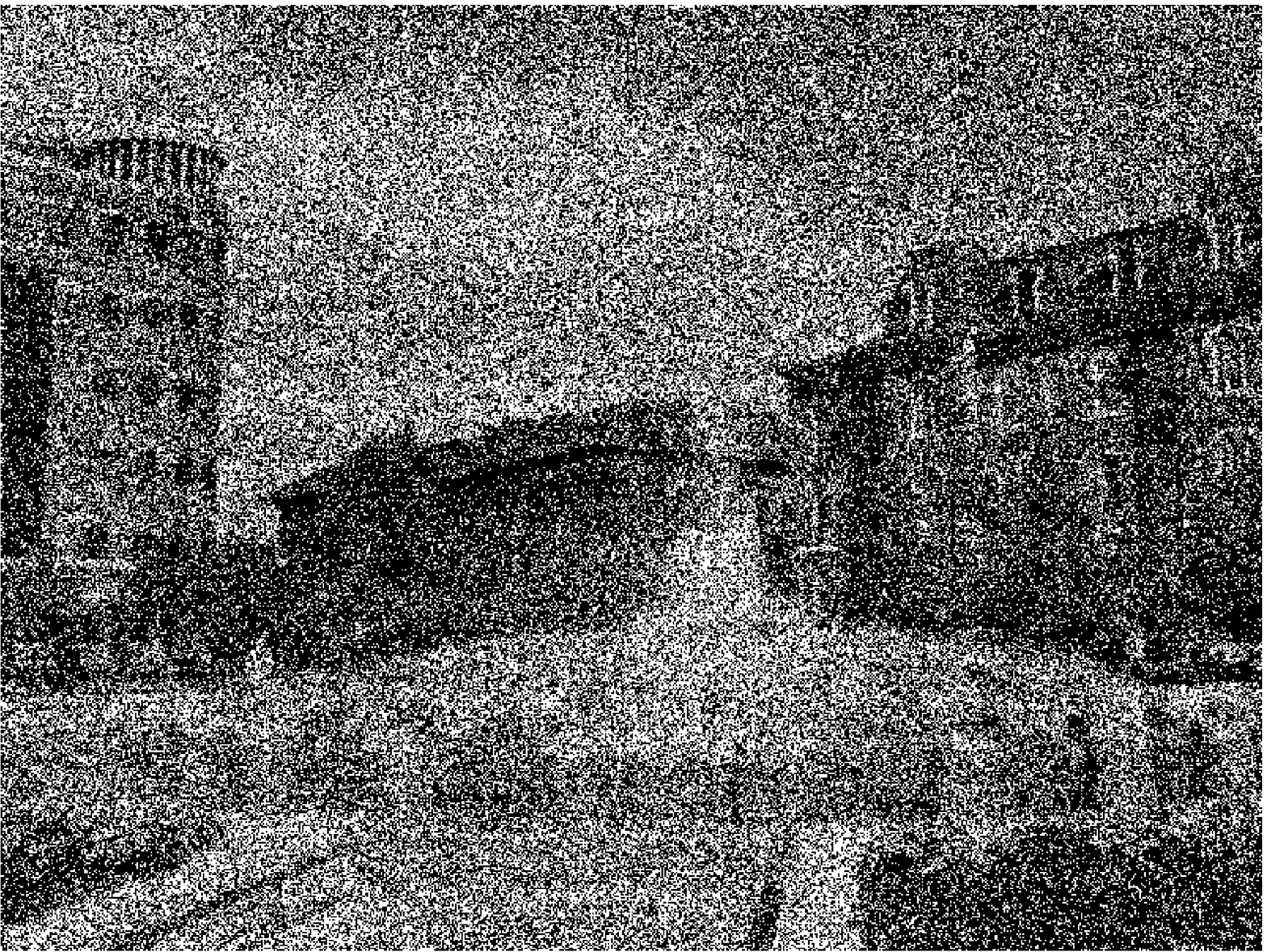}&
\includegraphics[width=0.23\linewidth]{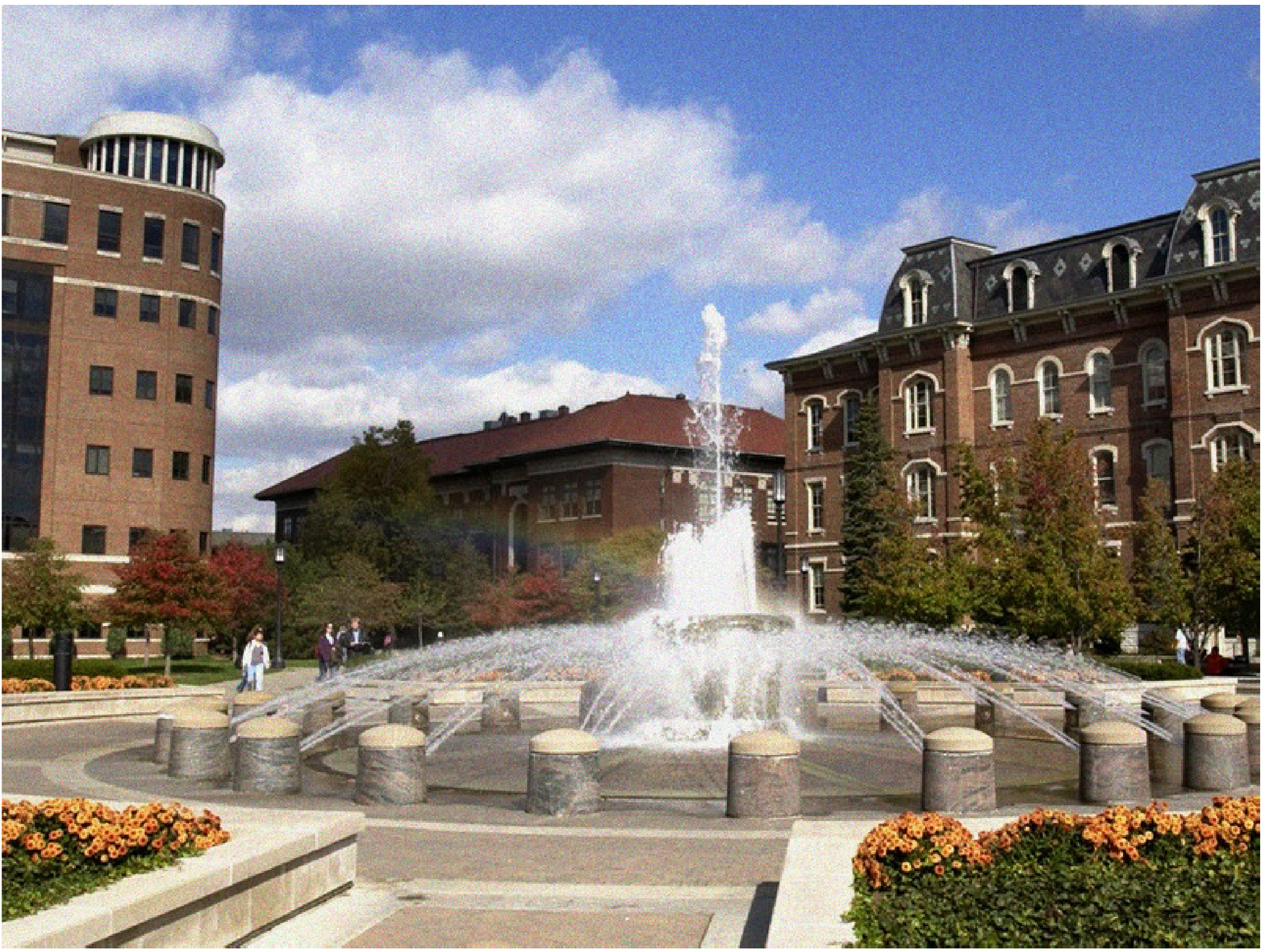}&
\includegraphics[width=0.23\linewidth]{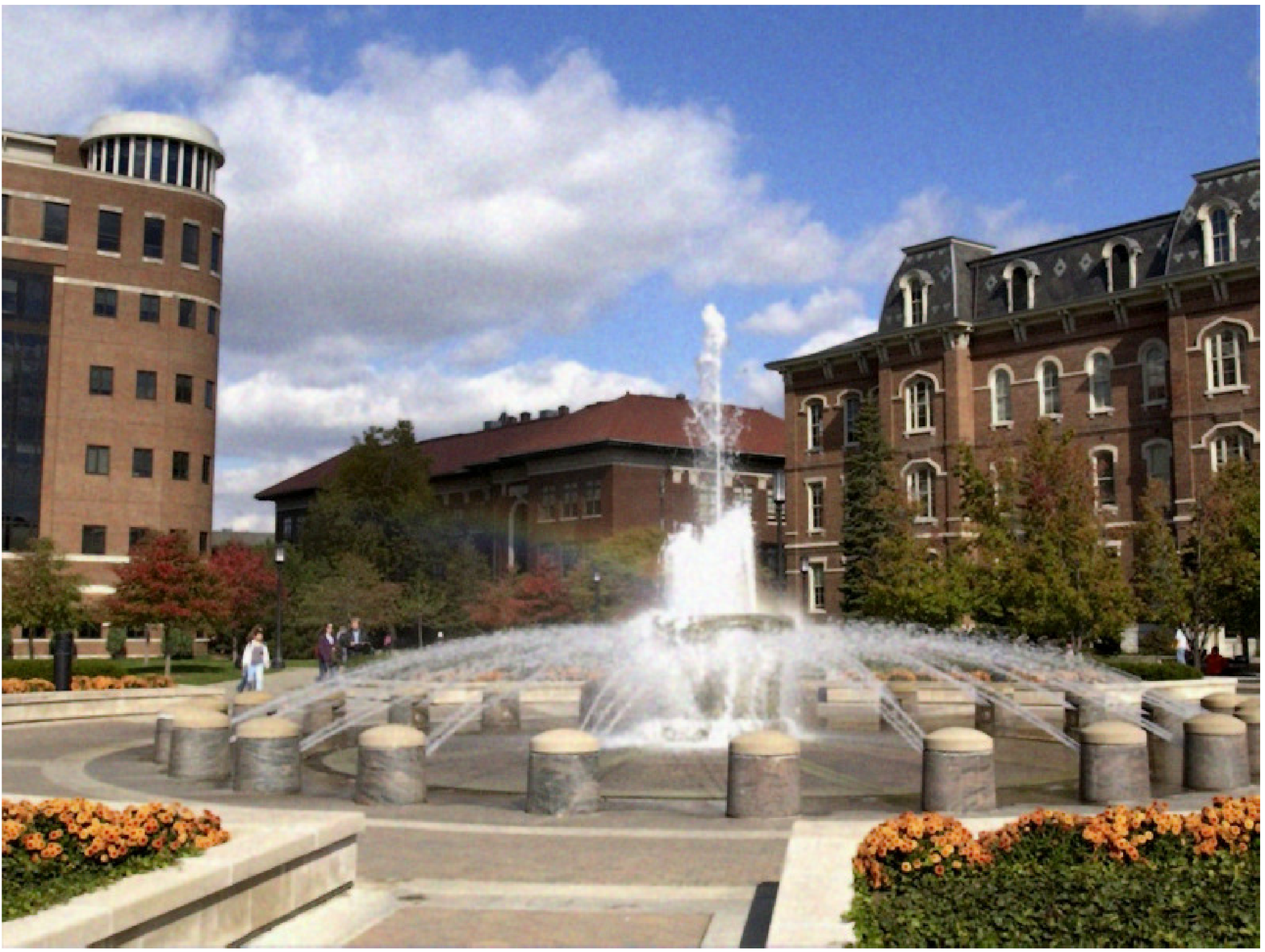}&
\includegraphics[width=0.23\linewidth]{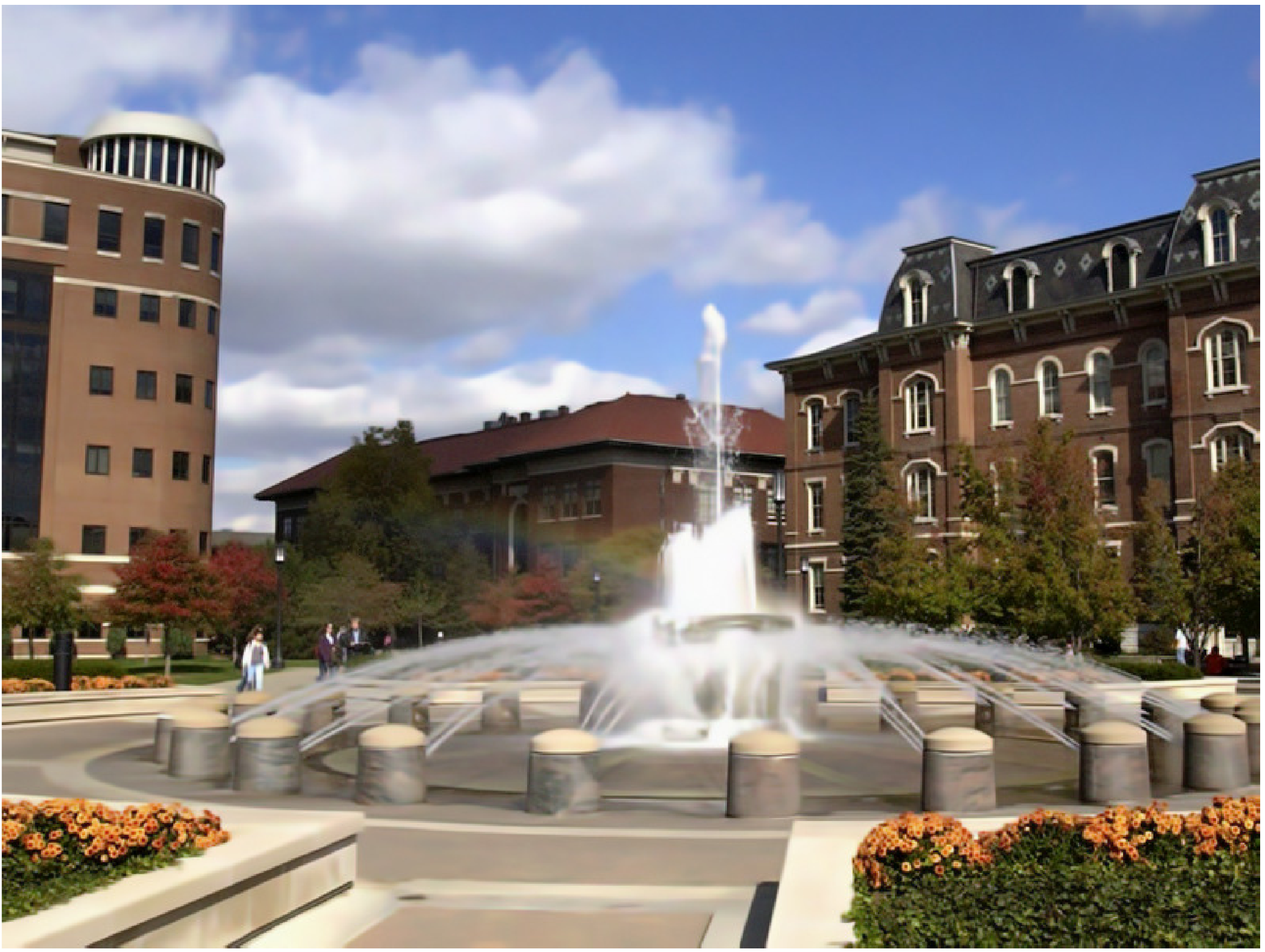}\\
\includegraphics[width=0.23\linewidth]{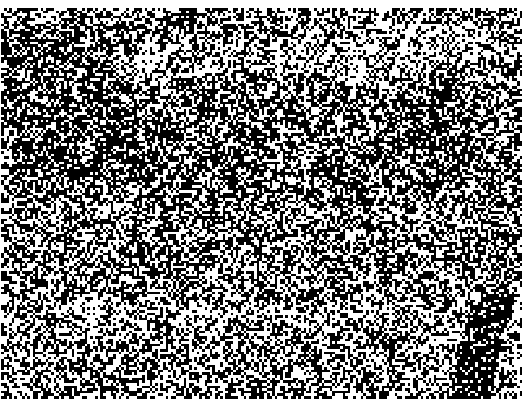}&
\includegraphics[width=0.23\linewidth]{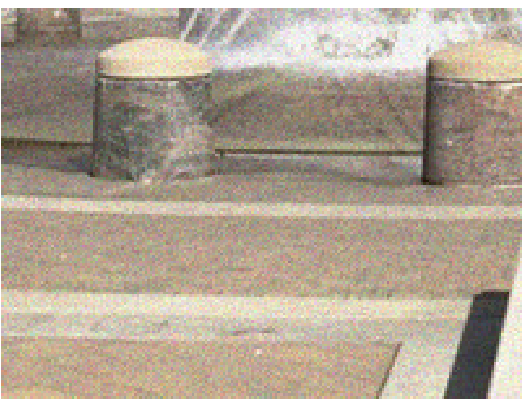}&
\includegraphics[width=0.23\linewidth]{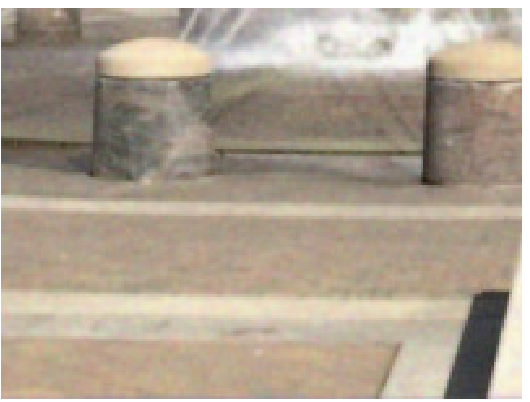}&
\includegraphics[width=0.23\linewidth]{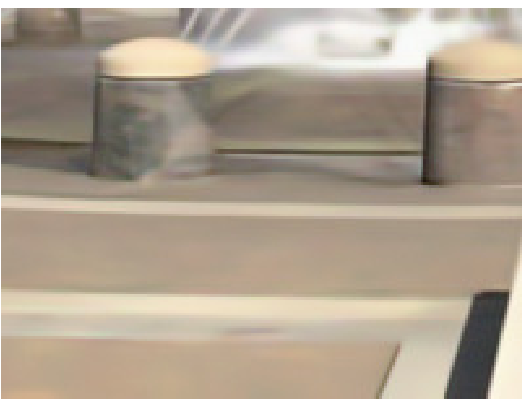}\\
(a) Binary input & (b) Yang et al. \cite{Yang_Lu_Sbaiz_2012} 20.16dB & (c) Chan and Lu \cite{Chan_Lu_2014} 26.74dB & (d) Ours-C 28.81dB
\end{tabular}
\caption{Single photon imaging results. The bottom row is a zoomed-in figure of the top row.}
\label{fig:QIS images}
\end{figure*}

\subsection{Single Photon Imaging}
We next consider the single-photon imaging problem. In this experiment, we consider four sets of experiments for $K = 4, 6, 8, 10$ (along horizontal and vertical directions). The sensor gain is set as $\alpha=K^2$. For comparison, we choose the two existing algorithms. The first one is the maximum likelihood estimation (MLE) method by Yang et al. \cite{Yang_Lu_Sbaiz_2012}. For our specific choice of $\mG$ in \eref{eq:G single photon}, the MLE solution has a closed-form expression. The second method is a total variation method by Chan and Lu \cite{Chan_Lu_2014}. This method utilizes the ADMM algorithm when solving the problem. We are aware of other existing Poisson denoising methods such as \cite{Figueiredo_Bioucas_2010,Rond_Giryes_Elad_2015}. However, none of these methods are directly applicable to the quantized Poisson problem.

Since for this problem the observed binary pattern is a truncated Poisson random variable, we perform a Monte-Carlo simulation by repeating each case for 8 independent trials. We then report the average and the standard deviation of these 8 independent trials. As shown in Table~\ref{table:single photon}, the standard deviation is indeed insignificant compared to the average PSNR. Here, we report the dataset average over the 10 images to ensure sufficient variability of the test. To visually compare the performance, in \fref{fig:QIS images} we show the result of a color image. In this experiment, we process the 3 color channels individually. For each channel, we simulate the photon arrivals by assuming $K = 8$. Then, we reconstruct the image using different algorithms. The result in \fref{fig:QIS images} shows that visually the proposed algorithm produces images with less noise.

\begin{figure*}[!]
\centering
\footnotesize
\begin{tabular}{cccccc}
\includegraphics[width=0.13\linewidth]{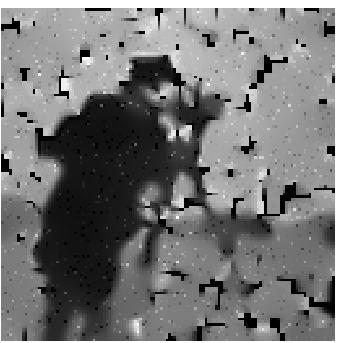}&
\hspace{-1ex}\includegraphics[width=0.13\linewidth]{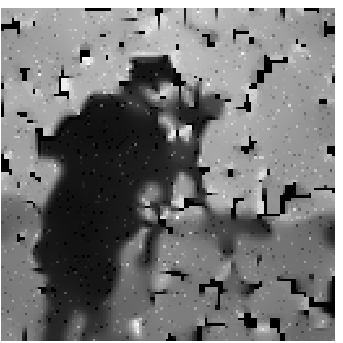}&
\hspace{-1ex}\includegraphics[width=0.13\linewidth]{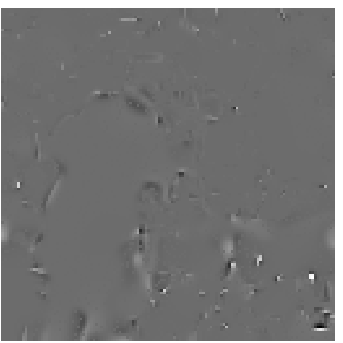}&
\hspace{-1ex}\includegraphics[width=0.13\linewidth]{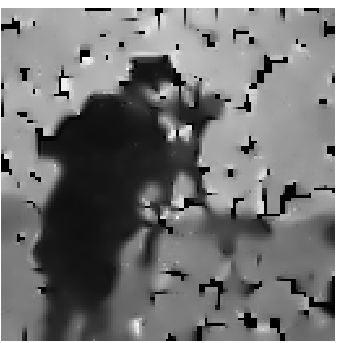}&
\hspace{-1ex}\includegraphics[width=0.13\linewidth]{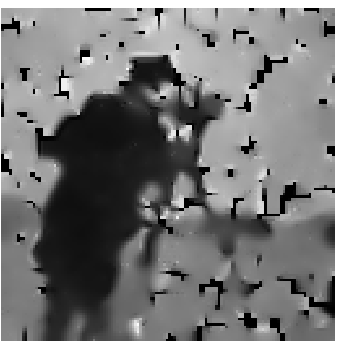}&
\hspace{-1ex}\includegraphics[width=0.13\linewidth]{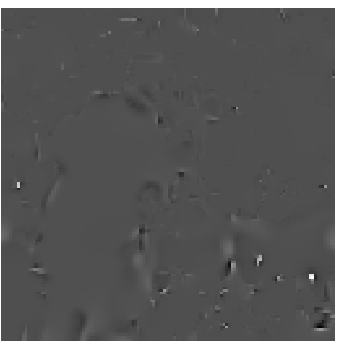}\\
(a) $\vx$ &
\hspace{-1ex} (b) $\vy$  &
\hspace{-1ex} (c) $\vx-\vy$  &
\hspace{-1ex} (d) $\calD_{\sigma}(\vx)$ &
\hspace{-1ex} (e) $\calD_{\sigma}(\vy)$ &
\hspace{-1ex} (f) $\calD_{\sigma}(\vx)-\calD_{\sigma}(\vy)$
\end{tabular}
\caption{Counter example showing non-local means is expansive. $\kappa = \|\calD_{\sigma}(\vx) - \calD_{\sigma}(\vy)\|^2/\|\vx-\vy\|^2 = 1.1775$.}
\label{fig:counter}
\vspace{-2ex}
\end{figure*}

\section{Conclusion}
We presented a continuation scheme for Plug-and-Play ADMM. We showed that for any \emph{bounded denoisers} (denoisers that asymptotically converges to the identity operator), the new Plug-and-Play ADMM has a provable fixed point convergence. We demonstrated two applications of the new algorithm for single image super-resolution and the single photon imaging problem. For the single image super-resolution problem, we presented a closed-form approach to solve one of the two subproblems in the ADMM algorithm. The closed-form result allows significantly faster implementation than iterative methods. Experimentally, we found that Plug-and-Play ADMM performs better than several existing methods.

\section*{Acknowledgement}
We thank Professor Charles Bouman and Suhas Sreehari for useful discussions. We thank Professor Gregery Buzzard for the constructive critique about the proof of Theorem 1. We thank the anonymous reviewers for the useful feedback which significantly improves the paper.

\appendices
\section{Counter Example of Non-Expansive Denoiser}

As mentioned in Section II.B, showing non-expansiveness of a denoiser could be difficult. Here we provide a counter example for the non-local means \cite{Buades_Coll_2005_Journal}.

To show that non-local means are expansive, we only need to find a pair $(\vx,\vy)$ such that
\begin{equation*}
\kappa = \|\calD_{\sigma}(\vx) - \calD_{\sigma}(\vy)\|^2/\|\vx-\vy\|^2 > 1.
\end{equation*}
To construct such example, we show in \fref{fig:counter} a pair of $(\vx,\vy)$ obtained through an inpainting problem using Plug-and-Play ADMM with constant $\rho$, i.e., $\gamma = 1$. (In fact, it does not matter how we obtain this pair of $(\vx,\vy)$. All we need to show is that there exists $(\vx,\vy)$ which makes $\kappa > 1$.)

The non-local means is a weighted average operation with weights
\begin{equation*}
W_{ij} = \exp\{-\|\vx_i-\vx_j\|^2/(2\sigma^2)\},
\end{equation*}
where $\vx_i$ is the $i$th patch of the image $\vx$. To further ensure that $\mW$ is doubly stochastic so that its eigenvalues are bounded between 0 and 1, we apply Sinkhorn-Knopp \cite{Sinkhorn_1964} to $\mW$ until convergence. Define this doubly stochastic matrix as $\mWtilde$. Then, the denoised output is given by
\begin{equation*}
\vxtilde = \calD_{\sigma}(\vx) \bydef \mWtilde \vx.
\end{equation*}
Therefore, the ratio we need to check is
\begin{equation*}
\kappa = \|\mWtilde_{\vx}(\vx) - \mWtilde_{\vy}(\vy)\|^2/\|\vx-\vy\|^2,
\end{equation*}
where the subscript $(\cdot)_{\vx}$ specifies the dependency of $\mWtilde$ on $\vx$ (or $\vy$). The denoised results are shown in \fref{fig:counter} (c) and (d). Although it may look subtle, one can verify that $\kappa = 1.1775$ which violates the requirement of non-expansiveness. This happens because $\mWtilde_{\vx} \not= \mWtilde_{\vy}$ for $\vx \not= \vy$. The dependency on $\vx$ and $\vy$ makes the operators nonlinear, and hence makes non-expansiveness difficult to validate.

Readers at this point may wonder why the proposed Plug-and-Play ADMM can alleviate the expansive issue. In a nutshell, the reason is that we force $\rho \rightarrow \infty$ so that $\sigma \rightarrow 0$. Consequently, the weight $\mW \rightarrow \mI$ as $\rho \rightarrow \infty$. For the original Plug-and-Play ADMM in \cite{Venkatakrishnan_Bouman_Wohlberg_2013}, $\mW \nrightarrow \mI$ because $\rho$ is fixed.

\section{Proof of Theorem 1}
To simplify the notations we first define a triplet $\vtheta^{(k)} \bydef (\vx^{(k)}, \, \vv^{(k)}, \, \vu^{(k)})$. Let $\mTheta$ be the domain of $\vtheta^{(k)}$ for all $k$. On $\mTheta$ we define a distance function $D: \mTheta \times \mTheta \rightarrow \R$ such that
\begin{align*}
D(\vtheta^{(k)}, \vtheta^{(j)})
&= \frac{1}{\sqrt{n}}\Big(\|\vx^{(k)}-\vx^{(j)}\|_2 + \|\vv^{(k)}-\vv^{(j)}\|_2 \\
&\quad\quad + \|\vu^{(k)}-\vu^{(j)}\|_2\Big).
\end{align*}
It then follows that $\Delta_{k+1} = D(\vtheta^{(k+1)}, \vtheta^{(k)})$. Since $\mTheta \subseteq \R^{3n}$ and $\R^{3n}$ is a complete metric space, as long as we can show that $\{\vtheta^{(k)}\}_{k=1}^{\infty}$ is a Cauchy sequence in $\mTheta$ with the distance function $D$, then $\vtheta^{(k)}$ should converge.

The Plug-and-Play ADMM involves two cases of the parameter update:
\begin{itemize}
\item Case 1: If $\Delta_{k+1} > \eta \Delta_{k}$, then $\rho_{k+1} = \gamma \rho_k$.
\item Case 2: If $\Delta_{k+1} \le \eta \Delta_{k}$, then $\rho_{k+1} = \rho_k$.
\end{itemize}
At iteration $k$, if Case 1 holds, then by Lemma 1, $\vtheta^{(k+1)}$ satisfies
\begin{equation}
D(\vtheta^{(k+1)},\vtheta^{(k)}) \le \frac{C'}{\sqrt{\rho_k}},
\label{eq:bound 1}
\end{equation}
for some universal constant $C' > 0$ independent of $k$. On the other hand, if Case 2 holds, then since $\Delta_{k+1} = D(\vtheta^{(k+1)}, \vtheta^{(k)})$ we have
\begin{equation}
D(\vtheta^{(k+1)},\vtheta^{(k)}) \le \eta D(\vtheta^{(k)},\vtheta^{(k-1)}).
\end{equation}

As $k \rightarrow \infty$, one of the following situations will happen:
\begin{enumerate}
\item[($S_1$)]: Case 1 occurs infinitely many times but Case 2 occurs finitely many times;
\item[($S_2$)]: Case 2 occurs infinitely many times but Case 1 occurs finitely many times;
\item[($S_3$)]: Both Case 1 and Case 2 occur infinitely many times.
\end{enumerate}

These three cases can be analyzed as follows. When ($S_1$) happens, there must exists a $K_1$ such that for $k \ge K_1$ only Case 1 will be visited. Thus,
\begin{align*}
D(\vtheta^{(k+1)},\vtheta^{(k)}) \le \frac{C'}{\sqrt{\rho_{K_1-1}} \sqrt{\gamma}^{k-K_1}}.
\end{align*}
When ($S_2$) happens, there must exists a $K_2$ such that for $k \ge K_2$ only Case 2 will be visited. Thus, we have
\begin{align*}
D(\vtheta^{(k+1)},\vtheta^{(k)})
&\le \eta^{k-K_2} D(\vtheta^{(K_2)},\vtheta^{(K_2-1)})\\
&\le \eta^{k-K_2} \frac{C'}{\rho_{K_2-1}}.
\end{align*}
($S_3$) is a union o the ($S_1$) and ($S_2$). Therefore, as long as we can show under ($S_1$) and ($S_2$) the sequence $\{\vtheta^{(k)}\}_{k=1}^{\infty}$ converges, the sequence will also converge under ($S_3$). To summarize, we show in Lemma 2 that regardless which of ($S_1$)-($S_3$), for any $k$ we have
\begin{equation*}
D(\vtheta^{k+1}, \vtheta^{(k)}) \le C'' \delta^{k},
\end{equation*}
for some constants $C''$ and $0 < \delta < 1$. Therefore,
\begin{equation}
D(\vtheta^{(k+1)},\vtheta^{(k)}) \rightarrow 0,
\label{eq:D_k}
\end{equation}
as $k \rightarrow \infty$.

To prove $\{\vtheta^{(k)}\}_{k=1}^{\infty}$ is a Cauchy sequence, we need to show
\begin{equation}
D(\vtheta^{(m)},\vtheta^{(k)}) \rightarrow 0,
\end{equation}
for all integers $m>k$ and $k \rightarrow \infty$. This result holds because for any finite $m$ and $k$,
\begin{align*}
D(\vtheta^{(m)},\vtheta^{(k)})
&\le \sum_{n=k+1}^{m} C'' \delta^n \\
&= \sum_{\ell = 1}^{m-k} C'' \delta^{\ell+k}\\
&= C''\delta^k \frac{1-\delta^{m-k+1}}{1-\delta}.
\end{align*}
Therefore, as $k\rightarrow \infty$, $D(\vtheta^{(m)},\vtheta^{(k)}) \rightarrow 0$. Hence, $\{\vtheta^{(k)}\}_{k=1}^{\infty}$ is a Cauchy sequence. Since a Cauchy sequence in $\R^{3n}$ always converges, there must exists $\vtheta^{*} = (\vx^*, \vv^*, \vu^*)$ such that
\begin{equation}
D(\vtheta^{(k)},\vtheta^*) \rightarrow 0.
\end{equation}
Consequently, we have $\|\vx^{(k)} - \vx^*\|_2 \rightarrow 0$, $\|\vv^{(k)} - \vv^*\|_2 \rightarrow 0$ and $\|\vu^{(k)} - \vu^*\|_2 \rightarrow 0$. This completes the proof.

\begin{lemma}
At iteration $k$, if Case 1 holds, then
\begin{equation}
D(\vtheta^{(k+1)},\vtheta^{(k)}) \le \frac{C'}{\sqrt{\rho_k}},
\end{equation}
for some universal constant $C' > 0$ independent of $k$.
\end{lemma}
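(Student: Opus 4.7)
The plan is to show that each of the three constituent norms in $D(\vtheta^{(k+1)},\vtheta^{(k)})$ is individually $O(\sqrt{n}/\sqrt{\rho_k})$, with constants depending only on $\lambda$, $C$, $L$, $\gamma$ and $\rho_0$. The three ingredients I need are (i) the bounded denoiser inequality applied to the $\vv$-update, (ii) the optimality condition of the $\vx$-subproblem combined with Assumption~\ref{assumption:bound gradient}, and (iii) the fact that the continuation rule never inflates $\rho$ by more than a factor $\gamma$ per iteration, so $\rho_{k-j} \ge \rho_k/\gamma^j$.

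First I would control the $\vu$-norms. Rewriting the $\vu$-update as $\vu^{(k+1)} = (\vx^{(k+1)}+\vu^{(k)}) - \vv^{(k+1)} = (\vx^{(k+1)}+\vu^{(k)}) - \calD_{\sigma_k}(\vx^{(k+1)}+\vu^{(k)})$ and invoking the bounded denoiser bound \eref{eq:bounded denoiser} with $\sigma_k^2=\lambda/\rho_k$ gives $\|\vu^{(k+1)}\|^2/n \le \lambda C/\rho_k$. The same argument at the previous iteration yields $\|\vu^{(k)}\|^2/n \le \lambda C/\rho_{k-1} \le \gamma\lambda C/\rho_k$, and by triangle inequality
\begin{equation*}
\tfrac{1}{\sqrt{n}}\|\vu^{(k+1)}-\vu^{(k)}\|_2 \;\le\; (1+\sqrt{\gamma})\sqrt{\lambda C}\,/\sqrt{\rho_k}.
\end{equation*}

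Next I would handle $\|\vx^{(k+1)}-\vx^{(k)}\|$ via the first-order optimality of \eref{eq:pp admm x}: $\nabla f(\vx^{(k+1)}) + \rho_k(\vx^{(k+1)}-\vv^{(k)}+\vu^{(k)})=\vzero$, hence $\vx^{(k+1)} = \vv^{(k)}-\vu^{(k)} - \rho_k^{-1}\nabla f(\vx^{(k+1)})$. Using the identity $\vv^{(k)} = \vx^{(k)}+\vu^{(k-1)}-\vu^{(k)}$ (which itself is just a rearrangement of \eref{eq:pp admm u} at step $k-1$), I get the clean expression
\begin{equation*}
\vx^{(k+1)}-\vx^{(k)} \;=\; \vu^{(k-1)} - 2\vu^{(k)} - \rho_k^{-1}\nabla f(\vx^{(k+1)}).
\end{equation*}
Combining $\|\vu^{(k-1)}\|/\sqrt{n} \le \gamma\sqrt{\lambda C/\rho_k}$, $\|\vu^{(k)}\|/\sqrt{n} \le \sqrt{\gamma\lambda C/\rho_k}$, and Assumption~\ref{assumption:bound gradient} (so $\rho_k^{-1}\|\nabla f(\vx^{(k+1)})\|/\sqrt{n} \le L/\rho_k \le L/\sqrt{\rho_0\rho_k}$) yields $\|\vx^{(k+1)}-\vx^{(k)}\|_2/\sqrt{n} \le C_2/\sqrt{\rho_k}$ for an explicit constant $C_2$.

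Finally, subtracting the two representations $\vv^{(k+1)} = \vx^{(k+1)}+\vu^{(k)}-\vu^{(k+1)}$ and $\vv^{(k)} = \vx^{(k)}+\vu^{(k-1)}-\vu^{(k)}$ gives
\begin{equation*}
\vv^{(k+1)}-\vv^{(k)} = (\vx^{(k+1)}-\vx^{(k)}) + (\vu^{(k)}-\vu^{(k-1)}) - (\vu^{(k+1)}-\vu^{(k)}),
\end{equation*}
each piece of which has already been bounded by $O(\sqrt{n}/\sqrt{\rho_k})$. Summing the three bounds delivers the lemma with an explicit $C'$ depending on $\lambda$, $C$, $L$, $\gamma$, $\rho_0$. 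The main subtlety I expect is bookkeeping: making sure every appearance of $\rho_{k-j}$ is replaced by the lower bound $\rho_k/\gamma^j$ so that the final constant is genuinely universal (independent of $k$), and handling the edge cases $k\in\{0,1\}$ by convention $\vu^{(-1)}=\vzero$. Note that the argument never uses the Case~1 hypothesis directly; the hypothesis only matters because it is what makes $1/\sqrt{\rho_k}\to 0$ useful downstream.
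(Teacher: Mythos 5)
Your proposal is correct and follows essentially the same route as the paper's proof: both rest on the bounded-denoiser inequality for the $\vv$- and $\vu$-terms, the first-order optimality condition of the $\vx$-subproblem together with Assumption~\ref{assumption:bound gradient}, and triangle-inequality bookkeeping through the $\vu$-update identity, with $\rho_{k-j}\ge\rho_k/\gamma^{j}$ absorbing the lag (the paper bounds $\vv^{(k+1)}-\vv^{(k)}$ first and derives the $\vx$-difference from it, whereas you reverse that order, which is immaterial). Your closing observation that the Case~1 hypothesis is never actually used is accurate and consistent with the paper's own argument.
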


\begin{proof}
Following the definition of $D(\vtheta^{(k+1)},\vtheta^{(k)})$, it is sufficient to show that
\begin{align*}
\frac{1}{\sqrt{n}}\left\| \vx^{(k+1)} - \vx^{(k)}\right\|_{2} &\le \frac{C_1}{\sqrt{\rho_k}},\\
\frac{1}{\sqrt{n}}\left\| \vv^{(k+1)} - \vv^{(k)}\right\|_{2} &\le \frac{C_2}{\sqrt{\rho_k}},\\
\frac{1}{\sqrt{n}}\left\| \vu^{(k+1)} - \vu^{(k)}\right\|_{2} &\le \frac{C_3}{\sqrt{\rho_k}},
\end{align*}
for some universal constants $C_1$, $C_2$ and $C_3$.

Let us consider
\begin{equation*}
\vx^{(k+1)} = \argmin{\vx} \; f(\vx) + \frac{\rho_k}{2}\|\vx - (\vv^{(k)}-\vu^{(k)})\|^2.
\end{equation*}
The first order optimality implies that
\begin{equation*}
\vx - (\vv^{(k)}-\vu^{(k)}) = -\frac{1}{\rho_k} \nabla f(\vx).
\end{equation*}
Since the minimizer is $\vx = \vx^{(k+1)}$, substituting $\vx = \vx^{(k+1)}$ and using the fact that $\nabla f$ is bounded yields
\begin{equation}
\frac{1}{\sqrt{n}}\left\|\vx^{(k+1)} - (\vv^{(k)}-\vu^{(k)})\right\|_{2} = \frac{\|\nabla f(\vx)\|_2}{\rho_k\sqrt{n}} \le \frac{L}{\rho_k}.
\label{eq:proof bound 1}
\end{equation}

Next, let $\vvtilde^{(k)} = \vx^{(k+1)}+\vu^{(k)}$ and $\sigma_k = \sqrt{\lambda/\rho_k}$. Define
\begin{equation*}
\vv^{(k+1)} = \calD_{\sigma_k}(\vvtilde^{(k)}).
\end{equation*}
Since $\calD_{\sigma_k}$ is a bounded denoiser, we have that
\begin{align}
& \frac{1}{\sqrt{n}}\left\|\vv^{(k+1)} - (\vx^{(k+1)}+\vu^{(k)})\right\|_{2}
= \frac{1}{\sqrt{n}}\left\|\vv^{(k+1)} - \vvtilde^{(k)}\right\|_{2} \notag\\
&= \frac{1}{\sqrt{n}}\left\|\calD_{\sigma_k}(\vvtilde^{(k)}) - \vvtilde^{(k)}\right\|_{2} \le \sigma_k \sqrt{C} = \frac{\sqrt{\lambda}\sqrt{C} }{\sqrt{\rho_k}}.
\label{eq:proof bound 2}
\end{align}

We can now bound $\left\|\vv^{(k+1)} - \vv^{(k)}\right\|_{2}$ as follows.
\begin{align}
\frac{1}{\sqrt{n}}\left\|\vv^{(k+1)} - \vv^{(k)}\right\|_{2}
&\le \frac{1}{\sqrt{n}}\left\|\vv^{(k+1)} - (\vx^{(k+1)}+\vu^{(k)}) \right\|_{2} \notag\\
&\quad+ \frac{1}{\sqrt{n}}\left\|(\vx^{(k+1)}+\vu^{(k)}) - \vv^{(k)}\right\|_{2}.\notag
\end{align}
Using \eref{eq:proof bound 1} and \eref{eq:proof bound 2}, we have
\begin{align}
&\frac{1}{\sqrt{n}}\left\|\vv^{(k+1)} - \vv^{(k)}\right\|_{2} \le \frac{\sqrt{\lambda}\sqrt{C} }{\sqrt{\rho_k}} + \frac{L}{\rho_k}\\
&= \frac{1}{\sqrt{\rho_k}}\left(\sqrt{\lambda}\sqrt{C} + \frac{L}{\sqrt{\rho_k}}\right) \le  \frac{1}{\sqrt{\rho_k}}\left(\sqrt{\lambda}\sqrt{C} + \frac{L}{\sqrt{\rho_0}}\right). \notag
\end{align}
Similarly, we can show that
\begin{align}
&\frac{1}{\sqrt{n}}\left\|\vu^{(k+1)} \right\|_{2} = \frac{1}{\sqrt{n}}\left\|\vu^{(k)} + (\vx^{(k+1)} - \vv^{(k+1)})\right\|_{2}\notag \\
&= \frac{1}{\sqrt{n}}\left\|\vu^{(k)} + \vx^{(k+1)} - \calD_{\sigma_k}(\vvtilde^{(k)})\right\|_{2}\notag \\
&= \frac{1}{\sqrt{n}}\left\|\vu^{(k)} + \vx^{(k+1)} - (\calD_{\sigma_k}(\vvtilde^{(k)}) - \vvtilde^{(k)}) - \vvtilde^{(k)}\right\|_{2}\notag \\
&\overset{(a)}{=} \frac{1}{\sqrt{n}}\left\|\calD_{\sigma_k}(\vvtilde^{(k)}) - \vvtilde^{(k)}\right\|_{2}
\le \frac{\sqrt{\lambda}\sqrt{C}}{\sqrt{\rho_k}},
\end{align}
where (a) holds because $\vvtilde^{(k)} = \vu^{(k)} + \vx^{(k+1)}$. Thus,
\begin{align*}
\frac{1}{\sqrt{n}}\left\| \vu^{(k+1)} - \vu^{(k)}\right\|_{2}
&\le \frac{1}{\sqrt{n}}\left(\left\| \vu^{(k+1)}\right\|_{2} + \left\|\vu^{(k)}\right\|_{2}\right)\\
&\le \frac{2\sqrt{\lambda}\sqrt{C} }{ \sqrt{\rho_k}}.
\end{align*}
Finally, since $\vu^{(k+1)} = \vu^{(k)} + (\vx^{(k+1)}-\vv^{(k+1)})$, we have
\begin{align}
&\frac{1}{\sqrt{n}}\left\| \vx^{(k+1)} - \vx^{(k)}\right\|_{2}\notag \\
&= \frac{1}{\sqrt{n}}\left\| \left(\vu^{(k+1)}-\vu^{(k)}+\vv^{(k+1)}\right) - \left(\vu^{(k)}-\vu^{(k-1)}+\vv^{(k)}\right) \right\|_{2} \notag \\
&\le \frac{1}{\sqrt{n}}\Big(\left\| \vu^{(k+1)}-\vu^{(k)} \right\|_{2} + \left\| \vu^{(k)}-\vu^{(k-1)} \right\|_{2} \notag\\
&\quad\quad\quad + \left\|\vv^{(k+1)} - \vv^{(k)}\right\|_{2}\Big)\notag \\
&\le \frac{2 \sqrt{\lambda} \sqrt{C}}{\sqrt{\rho_k}} + \frac{2 \sqrt{\lambda} \sqrt{C}}{ \sqrt{\rho_{k-1}}} + \frac{1}{\sqrt{\rho_k}}\left(\sqrt{\lambda}\sqrt{C} + \frac{L}{\sqrt{\rho_k}}\right)\notag\\
&\le \left((3+2\sqrt{\gamma})\sqrt{\lambda} \sqrt{C} + \frac{L}{\sqrt{\rho_0}}\right)\frac{1}{\sqrt{\rho_k}}.
\end{align}
\end{proof}

\begin{lemma}
The sequence $\{\vtheta^{(k)}\}_{k=1}^{\infty}$ always satisfies
\begin{equation}
D(\vtheta^{k+1}, \vtheta^{(k)}) \le C'' \delta^{k},
\end{equation}
for some constants $C''$ and $0 < \delta < 1$.
\end{lemma}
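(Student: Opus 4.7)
The plan is to prove $D(\vtheta^{(k+1)},\vtheta^{(k)}) \le C'' \delta^k$ by case analysis on the three asymptotic scenarios $(S_1)$, $(S_2)$, $(S_3)$ introduced in the main argument. I set $\delta = \max\{\eta, 1/\sqrt{\gamma}\}$, which lies in $(0,1)$ since $\eta < 1$ and $\gamma > 1$. The two tools are the unconditional bound $\Delta_{k+1} \le C'/\sqrt{\rho_k}$ (which is what the proof of Lemma 1 actually establishes, independent of which case triggers the update rule) together with the Case 2 inequality $\Delta_{k+1} < \eta \Delta_k$.

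Under $(S_1)$, beyond some threshold $K_1$ every step is Case 1, so $\rho_k = \gamma^{k-K_1+1}\rho_{K_1-1}$ for $k \ge K_1$, and Lemma 1 yields $\Delta_{k+1} \le (C'/\sqrt{\rho_{K_1-1}})\gamma^{-(k-K_1+1)/2}$, which is geometric at rate $1/\sqrt{\gamma} \le \delta$. Under $(S_2)$, beyond some $K_2$ every step is Case 2, and iterating $\Delta_{k+1} < \eta \Delta_k$ yields $\Delta_{k+1} \le \eta^{k-K_2+1}\Delta_{K_2}$, geometric at rate $\eta \le \delta$. In both scenarios the finitely many iterates preceding the threshold contribute only a bounded constant that can be absorbed into $C''$.

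Under $(S_3)$, both cases occur infinitely often, and I would combine the two decay mechanisms. Enumerating the Case 1 steps as $T_1 < T_2 < \cdots$, so that $\rho_{T_m+1} = \gamma^m \rho_0$ after absorbing a finite prefix, for any $k$ with $T_m < k \le T_{m+1}$ every intermediate step is Case 2; so iterating the Case 2 inequality starting from $T_m+1$ and invoking Lemma 1 at that index gives $\Delta_{k+1} \le \eta^{k-T_m}(C'/\sqrt{\rho_0})\gamma^{-m/2}$. Bounding each factor by the appropriate power of $\delta$ and absorbing the remaining multiplicative constants into $C''$ produces the claimed geometric bound. This scenario is the main obstacle, because the Case 1 events can be arbitrarily sparse, so one must argue carefully that the combined exponent $(k-T_m)+m$ scales linearly in $k$; this bookkeeping is what carries most of the weight and is the reason the lemma only asserts existence of $C''$ and $\delta$ rather than providing a universal choice independent of the trajectory.
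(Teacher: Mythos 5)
Your treatment of $(S_1)$ and $(S_2)$ matches the paper's proof of this lemma: the same $\delta = \max(\eta, 1/\sqrt{\gamma})$, the same geometric decay from Lemma 1 under repeated Case 1 and from iterating $\Delta_{k+1}\le\eta\Delta_k$ under repeated Case 2, with the finite prefix absorbed into $C''$. The difference is in $(S_3)$, and there your sketch has a genuine gap that you yourself flag but do not close: the combined exponent $(k-T_m)+m$ does \emph{not} scale linearly in $k$. Concretely, if the Case 1 events are sparse, say $T_m\sim 2^m$, then at $k=T_m+1$ the best your chaining gives is $(C'/\sqrt{\rho_0})\,\gamma^{-m/2}$ with $m\sim\log_2 k$, which decays only polynomially in $k$ and cannot be dominated by $C''\delta^k$. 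The root cause is that each Case 1 event resets your chain to the unconditional bound $C'/\sqrt{\rho_k}$, so the $\eta$-contraction accumulated in earlier Case 2 runs is forgotten; only the trailing Case 2 run and the \emph{count} of Case 1 events survive into the exponent, and their sum can be $o(k)$.

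Two remarks. First, you have not overlooked an idea the paper supplies: the paper's own proof asserts $D(\vtheta^{(k+1)},\vtheta^{(k)})\le\max\bigl(C'/(\sqrt{\rho_{K_1-1}}\sqrt{\gamma}^{\,k-K_1}),\;\eta^{k-K_2}C'/\rho_{K_2-1}\bigr)$ ``at any iteration $k$,'' but $K_1$ and $K_2$ are defined only under $(S_1)$ and $(S_2)$ respectively, so that inequality is unjustified precisely in the $(S_3)$ regime you are analyzing. Second, the downstream goal (the Cauchy property in the proof of Theorem 1) does not actually require a pointwise geometric bound: grouping iterations into blocks between consecutive Case 1 events, the $m$th block contributes at most $\tfrac{1}{1-\eta}\cdot\tfrac{C'}{\sqrt{\rho_0}}\gamma^{-m/2}$ to $\sum_k\Delta_{k+1}$, so the residuals are summable and $\{\vtheta^{(k)}\}$ is Cauchy in all three scenarios. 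To make your route rigorous you should either prove the lemma only under $(S_1)$/$(S_2)$ and handle $(S_3)$ by this summability argument, or add a hypothesis controlling the gaps $T_{m+1}-T_m$; as written, the geometric bound in the statement is not established for $(S_3)$.
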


\begin{proof}
At any iteration $k$, it holds that
\begin{align*}
D(\vtheta^{k+1}, \vtheta^{(k)})
&\le \max\left( \frac{C'}{\sqrt{\rho_{K_1-1}} \sqrt{\gamma}^{k-K_1}}, \eta^{k-K_2} \frac{C'}{\rho_{K_2-1}}\right)\\
&\le \max\left( C'_1 \left(\frac{1}{\sqrt{\gamma}}\right)^{k}, C'_2 \eta^{k} \right),
\end{align*}
where $C'_1 = C' \sqrt{ \frac{ \gamma^{K_1}}{\rho_{K_1-1}}}$ and $C'_2 = \frac{C'\eta^{-K_2}}{\rho_{K_2-1}}$. Therefore, by letting
\begin{align*}
C'' = \max \; (C'_1, C'_2), \quad\mbox{and}\quad \delta = \max \; ( 1/\sqrt{\gamma}, \eta),
\end{align*}
we obtain the desired result, as $\gamma > 1$.
\end{proof}

\bibliographystyle{IEEEbib}
\bibliography{refs}

\end{document}